\newcommand{\aname}{SORL}
\newcommand{\pponame}{SO-PPO}
\newcommand{\grponame}{SO-GRPO}
\newcommand{\greenchar}[1]{\colorbox{green!30}{\strut #1}}
\newcommand{\redchar}[1]{\colorbox{red!30}{\strut #1}}
\def\thanks#1{\protected@xdef\@thanks{\@thanks
        \protect\footnotetext{#1}}}
\newtheorem{lemma}{Lemma}[section]
\title{Stabilizing Off-Policy Training for Long-Horizon LLM Agent via\\ Turn-Level Importance Sampling and Clipping-Triggered Normalization}
\author{
Chenliang Li\textsuperscript{1}$^{*}$\thanks{$^{*}$This work was done during an internship at GE HealthCare, Bellevue, WA.}\thanks{\raggedright Corresponding authors: Chenliang Li (\texttt{chenliangli@tamu.edu}), and Adel Elmahdy (\texttt{adel.elmahdy@gehealthcare.com}).},
Adel Elmahdy\textsuperscript{2},
Alex Boyd\textsuperscript{2},
Zhongruo Wang\textsuperscript{3},
Siliang Zeng\textsuperscript{4},\\
Alfredo Garcia\textsuperscript{1},
Parminder Bhatia\textsuperscript{2},
Taha Kass-Hout\textsuperscript{2},
Cao Xiao\textsuperscript{2},
Mingyi Hong\textsuperscript{4} \\[4pt]
\textsuperscript{1}Texas A\&M University \quad
\textsuperscript{2}GE HealthCare \quad \\
\textsuperscript{3}Independent Researcher \quad
\textsuperscript{4}University of Minnesota\\[4pt]
}
\begin{document}

\maketitle
      
\begin{abstract}
Reinforcement learning (RL) algorithms such as PPO and GRPO are widely used to train large language models (LLMs) for multi-turn agentic tasks. However, in off-policy training pipelines, these methods often exhibit unstable optimization dynamics and are prone to performance collapse. Through empirical analysis, we identify two fundamental sources of instability in this setting: (1)~a granularity mismatch between token-level policy optimization and turn-structured interactions, and (2) high-variance and unreliable gradient updates induced by off-policy importance sampling and inaccurate advantage estimation. To address these challenges, we propose \aname, \underline{S}tabilizing \underline{O}ff-Policy \underline{R}einforcement \underline{L}earning for Long-Horizon Agent Training. \aname~introduces principled mechanisms that align policy optimization with the structure of multi-turn interactions and adaptively suppress unreliable off-policy updates, yielding more conservative and robust learning dynamics. Within this framework, we instantiate two stabilized algorithms: \pponame~and \grponame. Both algorithms are designed to mitigate gradient variance and prevent optimization collapse without requiring careful early stopping or heuristic tuning. We evaluate \pponame~and \grponame~on a range of multi-turn search benchmarks, including general question answering, multi-hop question answering, and medical multiple-choice QA tasks. Experimental results show that both methods consistently prevent training instabilities and performance collapses observed in standard PPO and GRPO, maintain lower clipping ratios and more stable optimization trajectories, and achieve superior or comparable task performance. These results demonstrate that the proposed algorithm provides a practical, scalable, and general framework for stabilizing reinforcement learning in multi-turn LLM agent training.
\end{abstract}

\section{Introduction}
Reinforcement learning has significantly advanced the reasoning capabilities of large language models, enabling strong performance in domains such as mathematical problem solving \citep{jaech2024openai,liu2024deepseek,yu2025dapo} and code generation \citep{el2025competitive,cui2025process}. Beyond these applications, RL has also shown promise in more agentic settings such as tool learning \citep{qian2025toolrl,feng2025retool}, where models learn to invoke external tools (e.g., web search engines), execute actions, and interact with real-world environments. Recent systems such as Deepseek V3 \citep{liu2024deepseek} and Kimi V2 \citep{team2025kimi} have achieved state-of-the-art performance on both mathematical reasoning (e.g., AIME, Math-500) and agentic benchmarks \citep{jimenez2023swe}. Despite these successes, the computational demands of multi-turn RL training pose significant practical challenges. These gains rely on numerous interaction samples, which are costly due to the large number of rollouts and multi-turn tool calls required during training. In practice, hardware and memory limits force each batch of collected samples to be split into several mini-batches \citep{schulman2017proximal} and updated sequentially. This naturally induces a hybrid update mechanism where later updates become increasingly off-policy \citep{chen2023sufficiency}. To maximize sample efficiency under computational constraints, practitioners often adopt off-policy pipelines, and the resulting distribution mismatch is typically corrected using importance sampling \citep{nachum2017trust}.

\begin{figure*}[t]
    \centering
    \includegraphics[
      width=0.98\linewidth,
      trim=0cm 2.5cm 0cm 2.0cm,
      clip
    ]{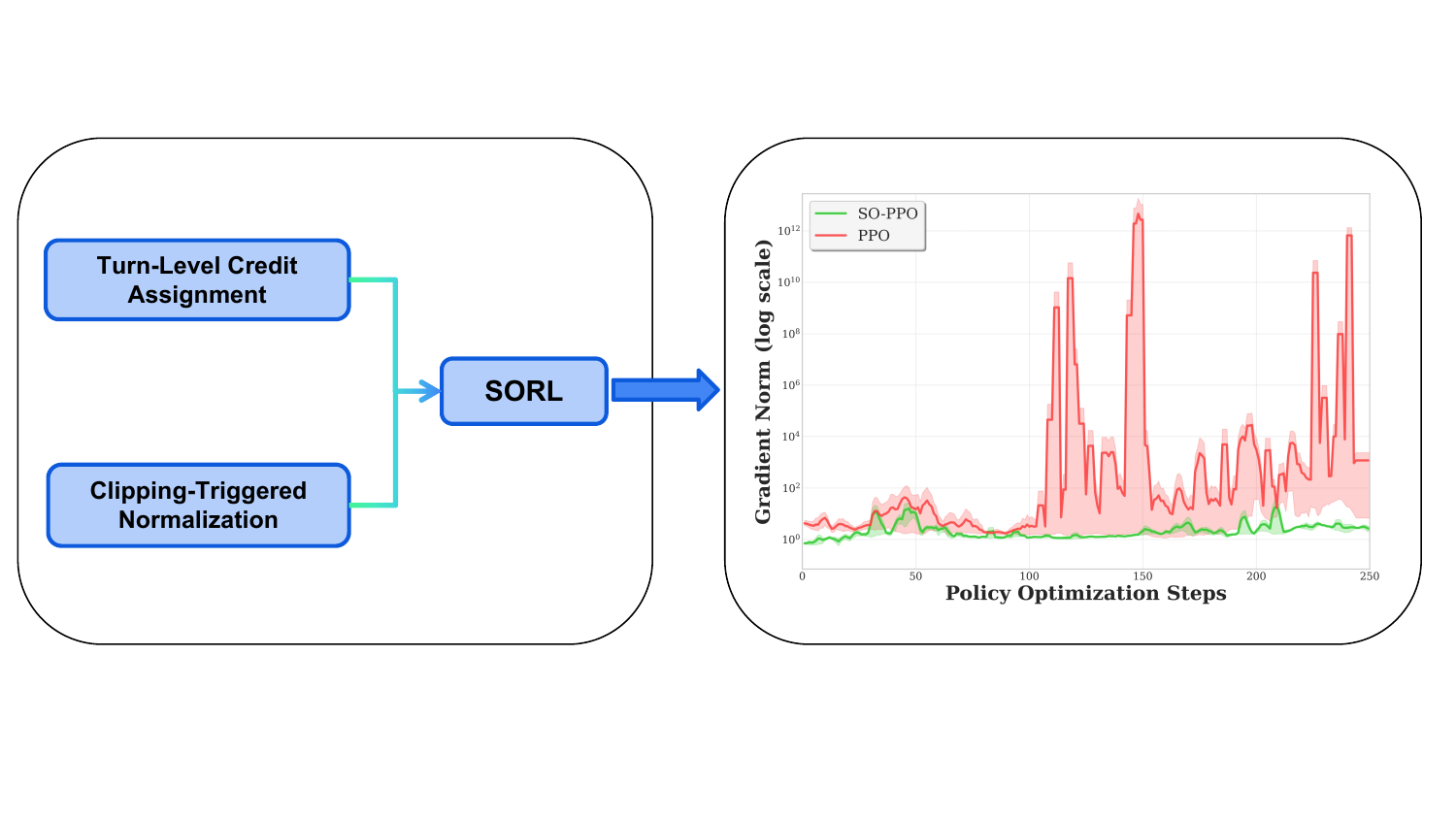}
    \caption{Illustration of the proposed \aname~framework. The left panel shows the two key components of \aname, turn-level credit assignment and clipping-triggered normalization. The right panel reports policy gradient norms during training, demonstrating that the proposed algorithm effectively suppresses extreme gradient spikes and leads to more stable optimization dynamics.}
    \label{fig:framework_STPPO}
\end{figure*}

The shift to off-policy methods, while necessary for computational efficiency, introduces high variance and can destabilize training \citep{munos2016safe,precup2000eligibility}. To address this challenge, proximal policy optimization (PPO) \citep{schulman2017proximal} constrains policy updates with a clipped surrogate objective. By limiting the impact of importance sampling ratios, PPO stabilizes learning even when mini-batch reuse makes later updates effectively off-policy. Building on this principle, recent studies \citep{zheng2025group,roux2025tapered,yan2025learning} refine importance sampling and clipping strategies to further enhance stability and efficiency.

Despite these advances, PPO training on multi-turn tasks still suffers from two core deficiencies: 
(1)~token-level credit assignment misaligns with the natural granularity of multi-turn environments \citep{zeng2025reinforcing}, where reasoning unfolds through distinct turn-level stages (e.g., problem analysis, query formulation, information processing), and 
(2)~off-policy updates increasingly dominate as training progresses, causing the data distribution to drift further from the current policy. This growing off-policyness exacerbates the variance of importance sampling and undermines the reliability of advantage estimates, particularly for out-of-distribution tokens \citep{dorka2022adaptively}. As a result, gradient variance increases over time, leading to unstable updates and potential policy collapse even under clipping.

To address these challenges, we propose \textbf{\aname}, a stabilizing off-policy reinforcement learning framework for long-horizon agent training, built on two key components.

First, \textbf{turn-level importance sampling} aligns policy optimization with the natural turn structure of multi-turn reasoning, enabling structure-aware credit assignment that balances the noise of token-level updates and the coarseness of sequence-level objectives. This design provides more precise attribution across different reasoning stages while mitigating variance amplification.

Second, \textbf{clipping-triggered normalization} stabilizes increasingly off-policy updates by explicitly correcting clipping-induced bias in PPO gradients. By retaining token-level clipping signals for variance control while leveraging turn-level importance ratios for credit assignment, SORL adaptively suppresses unreliable off-policy updates and yields more conservative and robust learning dynamics.

\textbf{Our key contributions are as follows:}
\begin{enumerate}[leftmargin=*,itemsep=3pt,topsep=0pt,parsep=0pt,partopsep=0pt]
    \item We empirically diagnose why vanilla PPO becomes unstable when applied to multi-turn LLM agent training under off-policy updates. Our analysis identifies two root causes unique to the agentic setting: (i) the granularity mismatch between token-level optimization and turn-level interactions, and (ii) the accumulation of variance from unreliable off-policy samples, where state--action pairs are poorly evaluated.

    \item We propose a unified off-policy reinforcement learning framework that aligns policy optimization with the structure of multi-turn reasoning and stabilizes training under increasing off-policyness. Specifically, the framework integrates turn-level credit assignment with a clipping-triggered normalization mechanism to suppress unreliable updates, yielding more conservative gradient estimates and significantly improved training stability in long-horizon agent learning.

    \item We demonstrate that the proposed framework is algorithm-agnostic and can be readily instantiated in other reinforcement learning methods beyond PPO, such as GRPO. Experimental results show that the resulting stabilized variant substantially improves training stability and prevents performance collapse compared to standard GRPO, highlighting the generality and practical effectiveness of our approach.
\end{enumerate}

These contributions collectively establish \textbf{SORL} as a principled framework for stabilizing off-policy reinforcement learning in multi-turn LLM agent training, integrating theoretical insights with algorithmic design. By addressing both the granularity mismatch in credit assignment and the instability induced by increasingly off-policy updates, our approach enables more stable and conservative optimization. Empirically, SORL mitigates the training collapse observed in large-scale models and consistently improves task performance, offering a practical and scalable solution for reinforcement learning with multi-turn LLM agents. 

\section{Related Works} \label{app:related_work}
\subsection{Reinforcement Learning with LLM Agents}
Recent advances in reinforcement learning for large language models (LLMs) have primarily progressed along two main directions: (1) feedback-driven alignment and policy optimization, and (2) agentic tool use with long-horizon reasoning. 

On the alignment side, RLHF~\citep{ouyang2022training} translates human or rule-based preferences into reward signals and optimizes policies using policy gradient methods, as exemplified by InstructGPT. Subsequent approaches such as direct preference optimization (DPO)~\citep{rafailov2023direct} avoid explicit reward modeling, while other RL algorithms—including PPO~\citep{schulman2017proximal}, GRPO~\citep{shao2024deepseekmath}, and RLOO~\citep{ahmadian2024back}—have been adapted to large models to balance stability and efficiency.

On the agentic side, methods such as ReAct~\citep{yao2023react}, Give~\citep{he2024give, he2025self} integrate reasoning and acting through interleaved thought–action steps, while Reflexion~\citep{shinn2023reflexion} leverages self-reflection and memory to enhance multi-turn decision making. Toolformer~\citep{schick2023toolformer} demonstrates self-supervised learning of tool usage, and WebGPT~\citep{nakano2021webgpt} introduces browser-based agents trained with human feedback. More recent systems, such as Search-R1~\citep{jin2025search}, WebAgent-R1~\citep{wei2025webagent}, and WEBRL~\citep{qi2024webrl}, extend this line of research to end-to-end reinforcement learning for web-based agents, where models are trained to perform retrieval, reasoning, and action within real-world interactive environments.

\subsection{Off Policy Reinforcement Learning with LLM Agents}
Off-policy reinforcement learning methods differ fundamentally from on-policy approaches by allowing the agent to learn from data generated by a policy different from the one currently being improved. This characteristic enables greater flexibility and data efficiency, as off-policy algorithms can reuse past experiences or observations collected under different policies. This property is particularly important in scenarios where data collection is costly or limited.

Recent work has extended these ideas to policy gradient methods tailored for off-policy learning in complex environments. The tapered off-policy REINFORCE (TOPR) algorithm \citep{roux2025tapered} builds on the classic REINFORCE method \citep{williams1992simple} by introducing a tapered importance sampling scheme for the policy gradient. This tapering reduces the variance and instability that commonly arise in naive off-policy gradient methods, as it asymmetrically adjusts the importance weights to better balance learning from positive and negative examples. However, because TOPR relies heavily on trajectory-level rewards, it is not well-suited for scenarios that require more fine-grained credit assignment.

Another research line explores off-policy extensions of GRPO. ARPO~\citep{lu2025arpo} incorporates a~replay buffer into GRPO’s sampling process, mixing on-policy and replayed samples to alleviate the zero-advantage issue, while RePO~\citep{li2025repo} reuses past outputs as off-policy samples with tailored replay strategies (e.g., recency-based, reward-oriented) to improve data efficiency and stability. Although these methods enhance sample utilization, they still require substantial training time, and their trajectory-level credit assignment often leads to slower convergence on multi-turn~tasks.

\subsection{Reinforcement Learning in Multi-turn Tasks}

Recent advances in LLMs have increasingly leveraged reinforcement learning to enhance multi-turn reasoning and tool use capabilities \citep{chen2025learning,cheng2025stop,li2025torl}. 
The work on turn-level credit assignment by \citet{zeng2025reinforcing} introduces a novel RL framework that models multi-turn interactive tasks as Markov decision processes, enabling fine-grained credit allocation to individual reasoning and tool-use steps within a trajectory. 
Complementing this, the Search-R1 framework proposed by \citet{jin2025empirical,jin2025search} similarly harnesses RL to train LLMs to reason about when and how to use external search engines interactively. 
By framing the reasoning and search process as an end-to-end decision-making problem, Search-R1 encourages LLMs to generate informative queries and iteratively refine answers based on real-time feedback from the search tool. 
This work emphasizes the importance of multi-turn interactions to bridge the gap between internal textual reasoning and external information retrieval, demonstrating improved reasoning robustness and answer quality over traditional single-turn or passive retrieval approaches.  Despite these advances, there remains a lack of algorithms specifically tailored to stabilize PPO in multi-turn LLM off-policy training. Our work addresses this gap by introducing a customized PPO variant that explicitly leverages turn-level structure, leading to more stable and scalable training.

\section{Preliminaries} \label{sec:prelminary}
\textbf{Notation.} An autoregressive language model parameterized by $\theta$ is defined as a policy~$\pi_{\theta}$. We use $x$ to denote a query and $\mathcal{D}$ as the query set. Given a response $y$ to a query $x$, its likelihood under the policy $\pi_\theta$ is denoted as $\pi_\theta(y \mid x) = \prod_{t=1}^{\lvert y\rvert} \pi_\theta(y_t \mid x, y_{<t}),$ where $\lvert y\rvert$ denotes the number of tokens in~$y$. A query-response pair $(x,y)$ can be scored by a verifier $r$, resulting in a reward $r(x,y)\in[0,1]$.

\textbf{Turn-level MDP.} We model multi-turn interactions as a Markov decision process $\linebreak \mathcal{M}=(\mathcal{S},\mathcal{A},\mathcal{P},\mathcal{R},\gamma)$ defined at the granularity of turns. A turn is denoted by $y^k \coloneqq (y_{t_k^\text{start}}, \ldots, y_{t_k^\text{end}})$, and a full trajectory can be written as $y \coloneqq (y^1; \ldots; y^n)$ for $n$ turns, where `$;$' denotes token concatenation. Each state $s_k \in \mathcal{S}$ corresponds to the dialogue context at the beginning of turn $k$, which we define as $s_k \coloneqq (x, y^1, y^2, \ldots, y^{k-1})$. The transition $\mathcal{P}(s_{k+1}\mid s_k,a_k)$ updates the context with environment feedback (e.g., results from a tool call), and the reward $\mathcal{R}(s_k,a_k)$ evaluates the quality of the turn. The discount factor $\gamma \in (0,1]$ balances present and future rewards. Unlike token-level MDPs, the unit of interaction here is the \emph{turn}, with boundaries $[t_k^\text{start}, t_k^\text{end}]$, aligning credit assignment with the natural structure of reasoning and retrieval phases.

\textbf{Proximal Policy Optimization (PPO).} 
Using samples generated from the old policy $\pi_{\theta_{\mathrm{old}}}$, PPO constrains the policy update within a proximal region of the old policy through the clipping mechanism. This allows for multiple gradient updates to the policy using the same batch of samples. Specifically, PPO employs the following objective for policy optimization:
\begin{align}
    &\mathcal{J}_{\mathrm{PPO}}(\theta) = 
    \mathbb{E}_{x \sim \mathcal{D},\, y \sim \pi_{\theta_{\mathrm{old}}}(\cdot|x)} 
    \Biggl[ \frac{1}{|y|} \sum_{t=1}^{|y|} 
    \min \left( w_t(\theta) \hat{A}_t,\ 
    \mathrm{clip} \left( w_t(\theta), 1 \!-\! \epsilon, 1 \!+\! \epsilon \right) \hat{A}_t \right) \Biggr],
    \label{eq:ppo_obj}
\end{align}
where the importance ratio of the token $y_t$ is defined as
\begin{equation}
w_t(\theta) = 
\frac{\pi_{\theta}(y_t \,|\, x, y_{<t})}
     {\pi_{\theta_{\mathrm{old}}}(y_t \,|\, x, y_{<t})},
\end{equation}
where $\hat{A}_t$ denotes the token-level advantage estimated by GAE based on the critic model $\hat{V}$. 
As discussed in related work, PPO has been widely used in many agentic tasks and performs well in single-turn settings. 
However, its performance in multi-turn settings is often unstable and prone to collapse; for example, \citet{yuan2025s} shows that PPO can fail in long reasoning tasks. In multi-turn settings, reward signals are typically provided only at the end of a long interaction. 
This delayed feedback leaves value predictions at intermediate states poorly constrained, and the resulting errors propagate backward through temporal-difference updates, compounding over many steps \citep{arjona2019rudder}. Another possible reason for PPO collapse is the high variance of importance sampling induced by the clipping mechanism in PPO. In long-horizon and off-policy settings, the importance ratios can become highly variable due to aggressive updates and factors such as training–inference mismatch, which may lead to unstable updates even with clipping. Similar clipping mechanisms are also used in other RL methods, such as GRPO. In our observations, once the variance of importance sampling becomes large, training is highly destabilized.
\begin{figure*}[!t]
    \centering
    \begin{subfigure}{0.24\linewidth}
        \centering
        \includegraphics[width=\linewidth]{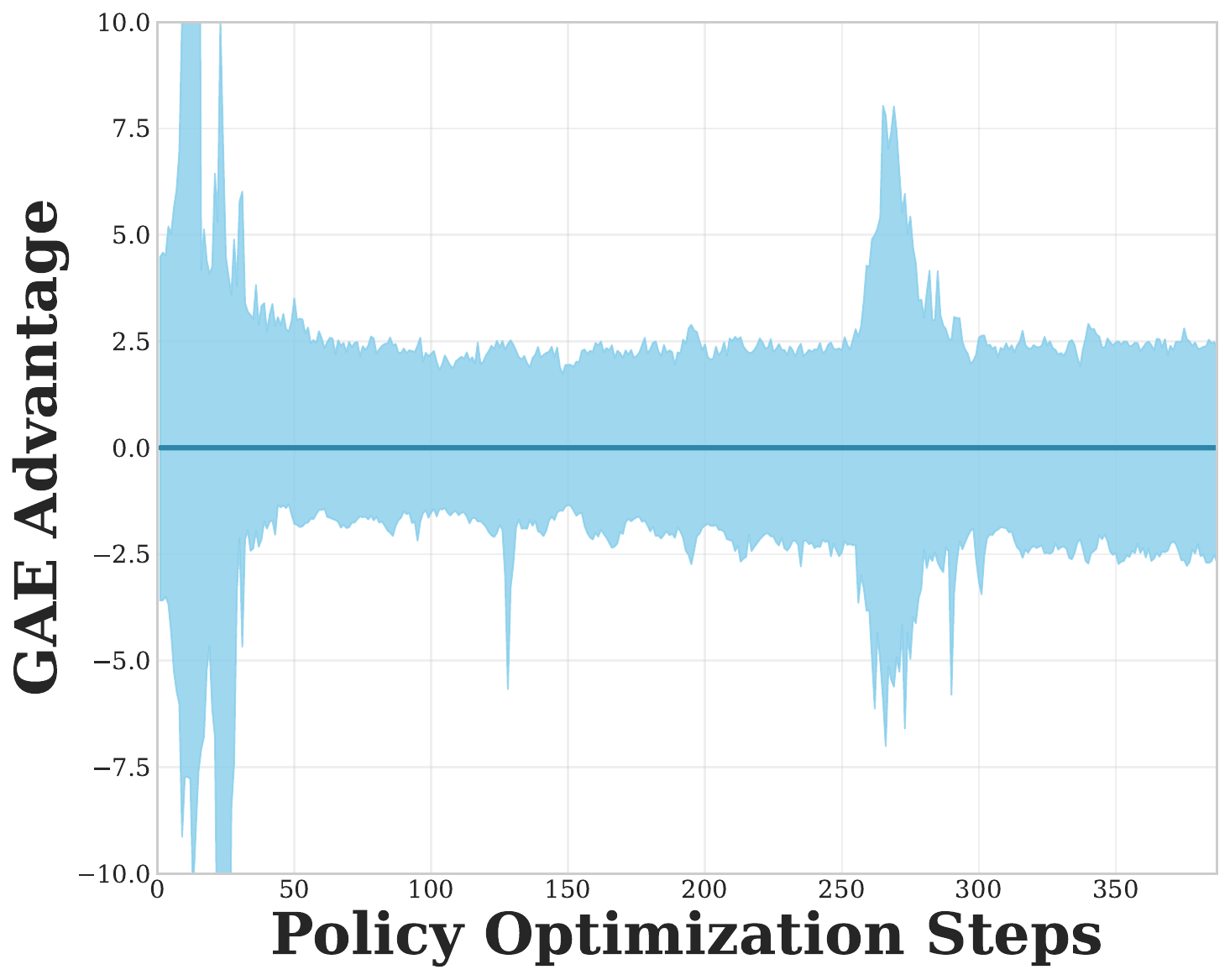}
        \caption{Advantage.}
        \label{fig:gae_adv}
    \end{subfigure}
    \begin{subfigure}{0.24\linewidth}
        \centering
        \includegraphics[width=\linewidth]{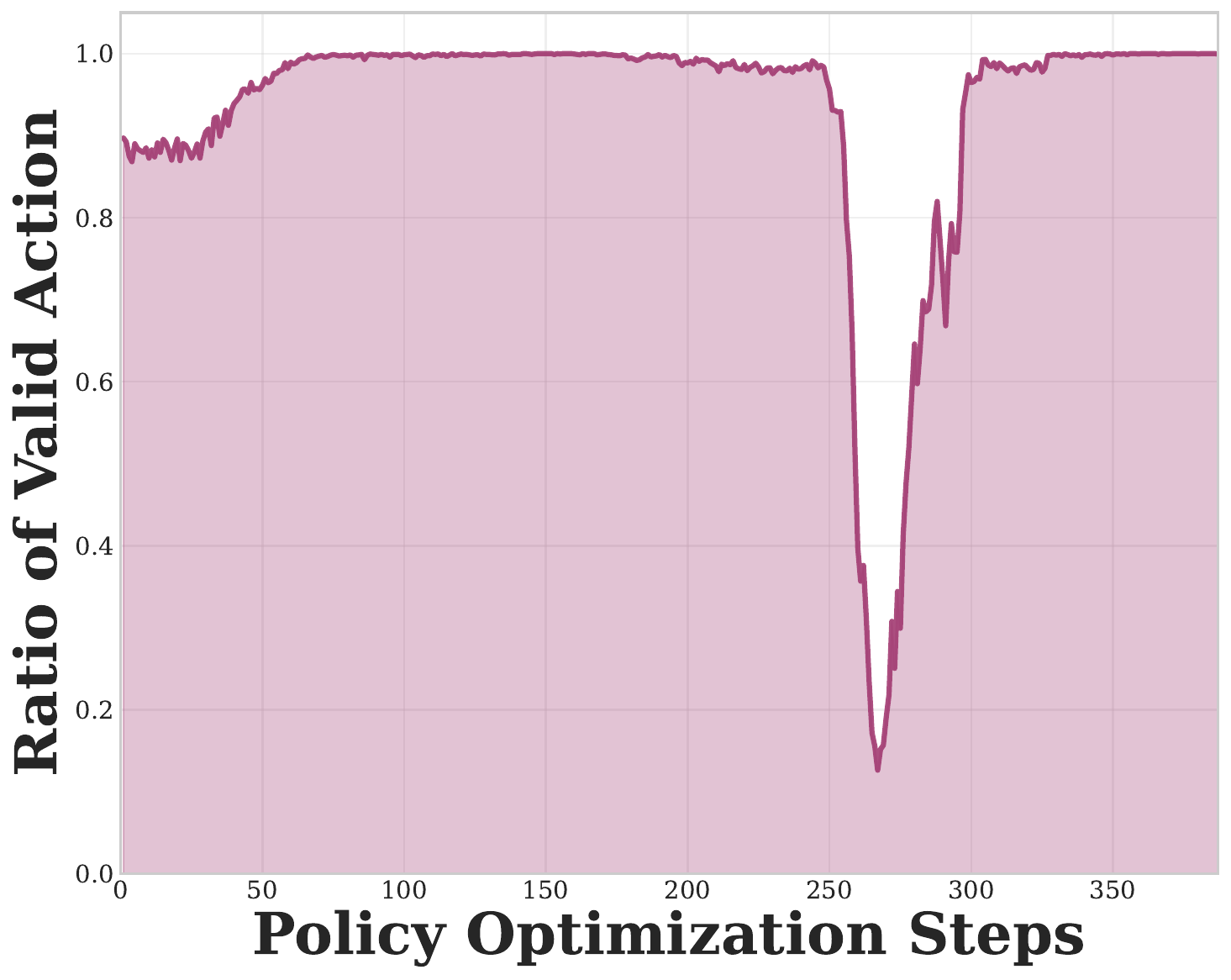}
        \caption{Ratio of valid actions.}
        \label{fig:valid_ratio}
    \end{subfigure}
    \begin{subfigure}{0.24\linewidth}
        \centering
        \includegraphics[width=\linewidth]{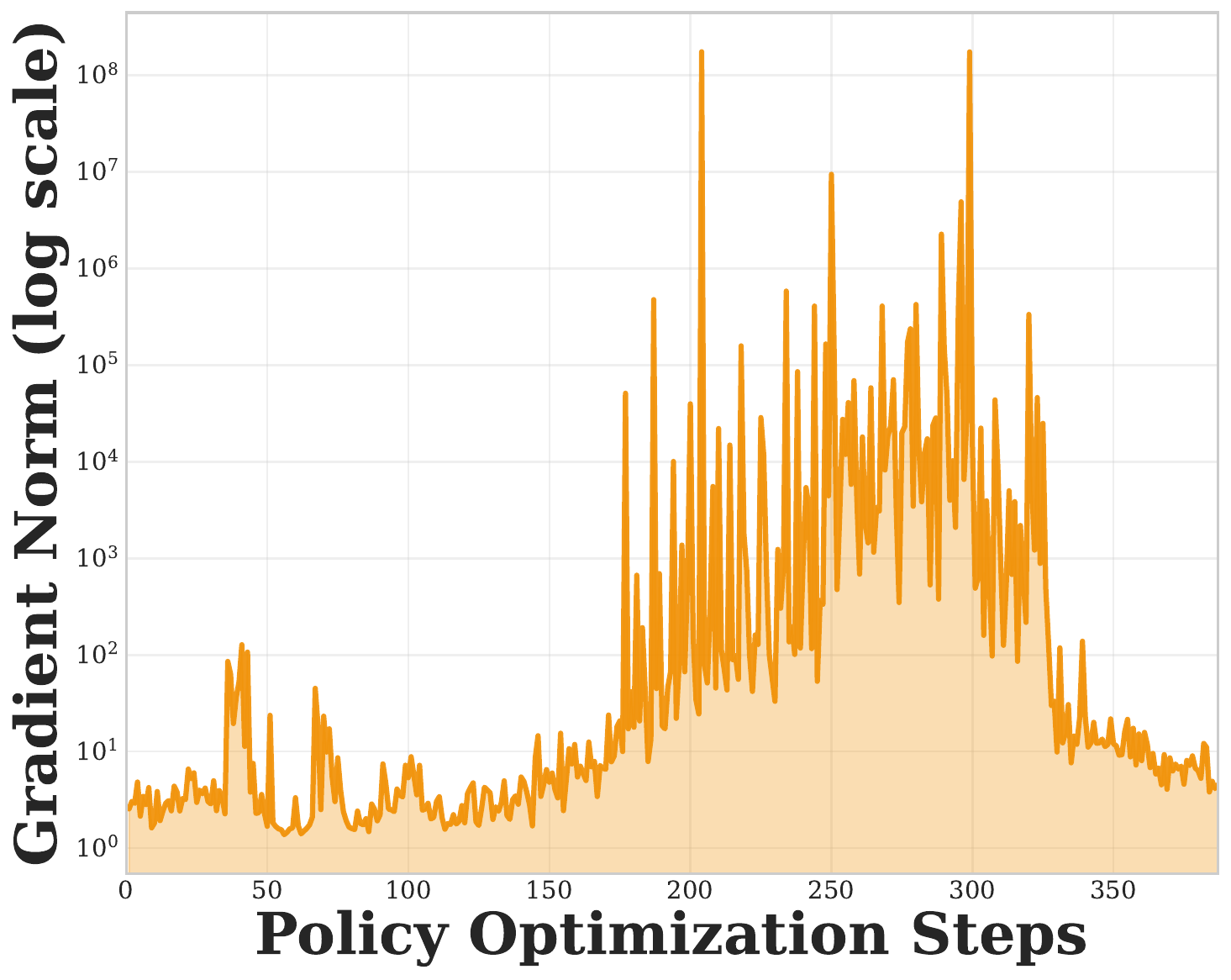}
        \caption{Policy gradient.}
        \label{fig:grad_norm}
    \end{subfigure}
    \begin{subfigure}{0.24\linewidth}
        \centering
        \includegraphics[width=\linewidth]{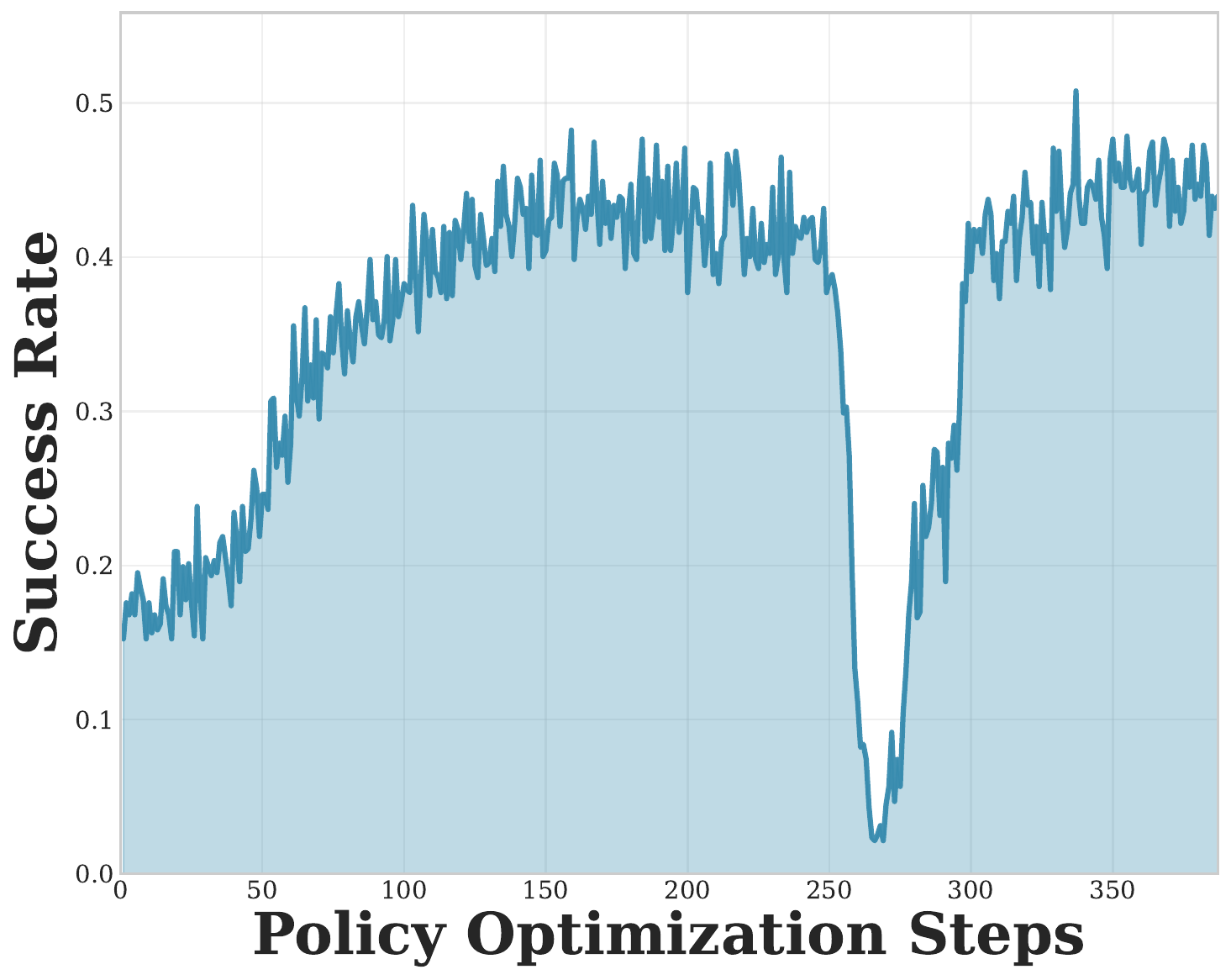}
        \caption{Success rate.}
        \label{fig:success_rate}
    \end{subfigure}
    \caption{Observations from a failed training run using the Qwen2.5-7B base model with token-level PPO. From left to right, we report the estimated advantage (batch mean and variance), the ratio of valid actions (i.e., successful tool invocations), the $\ell_2$ norm of the policy gradient, and the search-task success rate. All metrics are logged per training batch.}
    \label{fig:fail_observation}
\end{figure*}

\textbf{Verification.} We provide both metric diagnostics and a representative failure case by examining a failed run of Qwen2.5-7B under token-level PPO (Fig.~\ref{fig:fail_observation}) in multi-turn search tasks. Specifically, we record the numerical metrics during training, and the results reveal two critical issues: (1) around steps 250--300, the critic assigns highly variable advantages while the ratio of valid actions sharply collapses, which is a typical signature of reward hacking; and (2) these unreliable samples subsequently trigger an explosion in the policy gradient norm, rapidly degrading performance and leading to a collapse in task success rate. This evidence confirms that critic estimation errors and variance accumulation can destabilize off-policy updates in multi-turn settings, thereby motivating the need for stabilization mechanisms beyond turn-level scaling. To further examine this behavior, we provide textual demonstrations in Appendix~\ref{app:fail_example_text}, which show that prior to the collapse (at the 243rd and 247th epochs), importance sampling on format-related tokens exhibits high variance. 
Although these tokens are not critical to the LLM’s reasoning ability, such variance can accumulate and lead to a format collapse; after epoch 250, the LLM increasingly fails to follow the predefined output format, consistent with the behavior illustrated in Fig.~\ref{fig:fail_observation}(b).
\begin{figure}[t]
    \centering
    \includegraphics[width=0.45\linewidth]{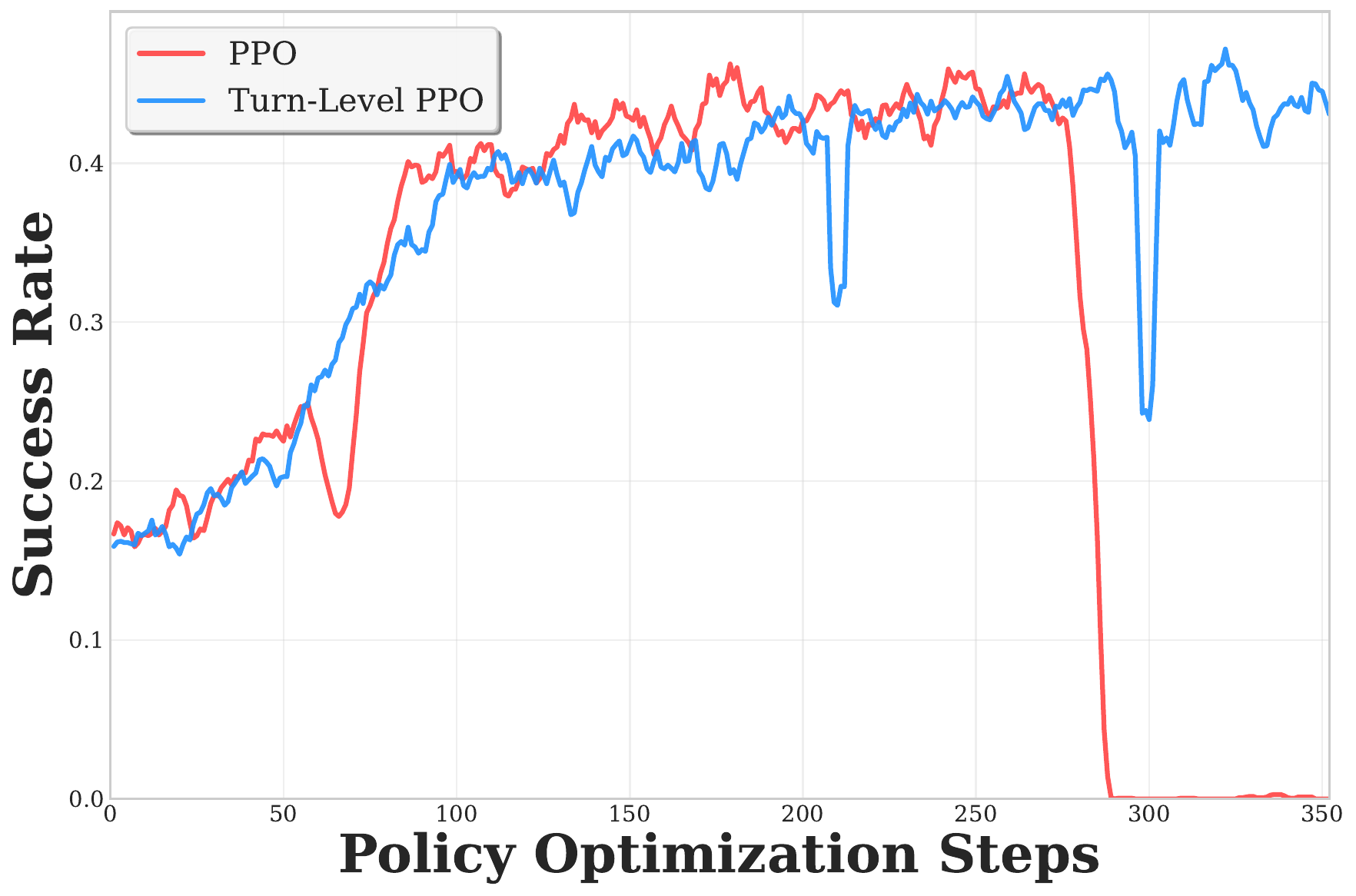}
    \hfill
    \includegraphics[width=0.45\linewidth]{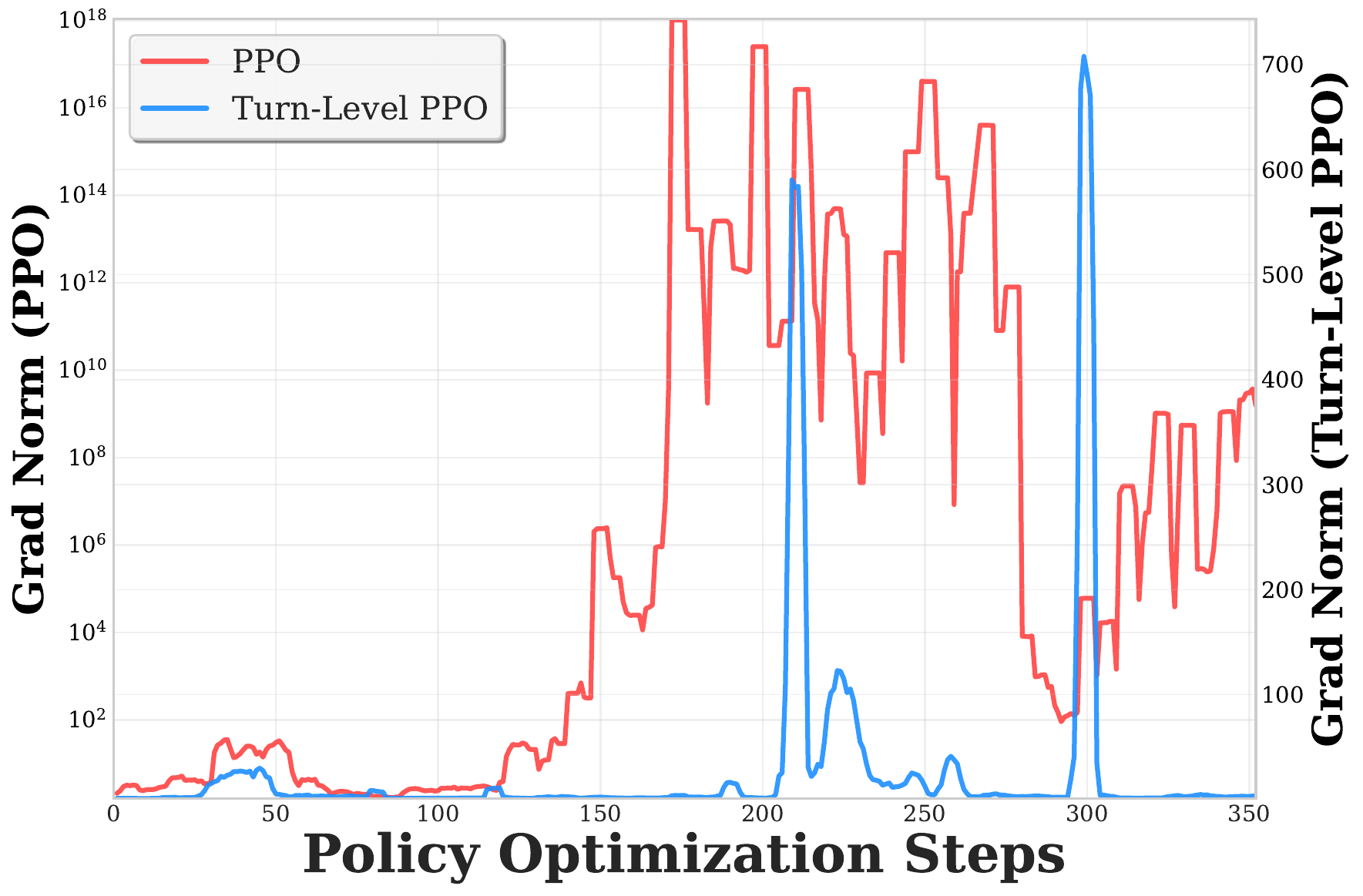}
    \caption{Comparison of token- and turn-level PPO on Qwen2.5-7B for the search task (single run for sanity check). 
    \textbf{Left:} Success rates on the search task. 
    \textbf{Right:} The $\ell_2$ norm of the policy gradients.}
    \label{fig:norm_result}
\end{figure}

\section{Proposed Algorithm}\label{sec:algo}

In this section, we present our approach for stabilizing reinforcement learning in multi-turn LLM training, termed \aname~(\underline{S}tabilizing \underline{O}ff-Policy \underline{R}einforcement \underline{L}earning for Long-Horizon Agent Training). Before diving into the details, we first introduce two key components of our algorithm and analyze how they stabilize training from the perspective of gradient norms.

\subsection{Turn-Level Importance Sampling} \label{sec:turn_level_ppo}
In the previous section, we provided a representative failure case from a failed training run (see Appendix~\ref{app:fail_example_text}), showing that token-level credit assignment in PPO can destabilize training and impair tool-calling stability.
Building on this, we introduce the first component of our framework, which provides a simple and effective way to adjust the granularity of credit assignment.
Specifically, for $K$ total turns, with $[t_k^{\text{start}}, t_k^{\text{end}}]$ denoting the token positional boundaries of turn $k \in \{1, \dots, K\}$, we formulate a turn-level PPO objective:

\begin{align}
    &\mathcal{J}_{\mathrm{Turn-PPO}}(\theta) 
    = \mathbb{E}_{x \sim \mathcal{D}, y \sim \pi_{\theta_{\mathrm{old}}}(\cdot|x)} \!\Biggl[\frac{1}{|y|}\sum_{k=1}^{K} \sum_{t=t_k^{\text{start}}}^{t_k^{\text{end}}} \!\min\!\left\{ w_k^{\text{turn}}(\theta) \hat{A}_{t}, \text{clip}(w_k^{\text{turn}}(\theta), 1\!-\!\epsilon, 1\!+\!\epsilon) \hat{A}_{t} \right\} \!\Biggr], 
    \label{eq:turn_level_PPO}
\end{align}
where $w_k^\text{turn}$ is a turn-level importance sampling weight. Inspired by the sequence-level weight used by \citet{zheng2025group}, we define a turn-level variant as:
\begin{align}\label{eq:turn_level_IS}
w_k^{\text{turn}}(\theta) &\coloneqq \left( \frac{\pi_\theta(y^k \mid x, y^{<k})}{\pi_{\theta_{\mathrm{old}}}(y^k \mid x, y^{<k})} \right)^{\frac{1}{|y^k|}} \exp\left( \frac{1}{|y^k|} \sum_{t=t_k^\text{start}}^{t_k^\text{end}} \log \frac{\pi_\theta(y_{t} \mid x, y_{<t})}{\pi_{\theta_{\mathrm{old}}}(y_{t} \mid x, y_{<t})} \right),
\end{align}
where the ratio is scaled by the length of the $k^\text{th}$ turn, $\lvert y^k \rvert$, to stabilize the objective across turns of varying lengths. {Next, we introduce a lemma showing that turn-level importance sampling induces a turn-wise aggregation of token-level advantages, such that all tokens within the same turn share a common, length-normalized credit signal. As a result, policy gradients are computed at the granularity of turns rather than individual tokens, aligning credit assignment with turn-level (action-level) interactions.} To facilitate this analysis, we define the events under which clipping is inactive for the objective in Eq.~\ref{eq:turn_level_PPO}:
\begin{align}
\mathcal{B}_{\text{turn}}^+ \coloneqq 
    \{\, (k,t) : \hat{A}_t \ge 0,\; w_k^{\text{turn}}(\theta) \le 1 + \epsilon \,\}, ~~~\mathcal{B}_{\text{turn}}^- \coloneqq 
    \{\, (k,t) : \hat{A}_t < 0,\; w_k^{\text{turn}}(\theta) \ge 1 - \epsilon \,\}.
\end{align}
Let $\mathcal{B}_{\text{turn}} \coloneqq \mathcal{B}_{\text{turn}}^+ \cup \mathcal{B}_{\text{turn}}^-$. When $\mathcal{B}_{\text{turn}}$ holds, the PPO objective reduces to the unclipped form $w_t \hat{A}_t$. In contrast, when $\mathcal{B}_{\text{turn}}^c$ occurs (clipping is active), it indicates a mismatch between the reference and current policies, and gradients of the corresponding tokens are set to zero.

\begin{lemma}
\label{lm:proposed_ppo_grad} The gradient of the objective function in Eq.~\ref{eq:turn_level_PPO} is given by
    {\small \begin{align}
        \nabla_{\theta} \mathcal{J}_{\mathrm{Turn-PPO}}(\theta)
        =\mathbb{E} \bigg[\frac{1}{|y|} 
        \sum_{k=1}^{K} 
        \underbrace{
        w_k^{\text{turn}}(\theta)\frac{\hat{A}^k}{|y^k|} }_{\text{Turn-Level Credit}}
        \nabla_{\theta} \log \pi_\theta(y^k | x, y^{<k}) \bigg],\label{eq:turn_level_credit_assignment}
    \end{align}} 
    where $|y^k| = t_k^{\text{end}} - t_k^{\text{start}} + 1$ and $\hat{A}^k \coloneqq \sum_{t=t_k^{\text{start}}}^{t_k^{\text{end}}} \mathds{1}_{\{(k,t) \in \mathcal{B}_{\text{turn}} \}} \ \hat{A}_{t}$.
\end{lemma}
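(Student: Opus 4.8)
The plan is to differentiate the objective directly, exploiting the fact that the sampling distribution $\pi_{\theta_{\mathrm{old}}}$ does not depend on $\theta$. Because the expectation in Eq.~\ref{eq:turn_level_PPO} is taken over $y \sim \pi_{\theta_{\mathrm{old}}}(\cdot \mid x)$, the gradient commutes with the expectation and with the finite sums over turns and tokens, so it suffices to compute the gradient of the inner summand $\min\{w_k^{\text{turn}}(\theta)\hat{A}_t,\ \text{clip}(w_k^{\text{turn}}(\theta),1-\epsilon,1+\epsilon)\hat{A}_t\}$ for each fixed $(k,t)$. The first step is a case analysis of this clipped-min term, mirroring the standard PPO argument but with the token ratio $w_t$ replaced by the turn ratio $w_k^{\text{turn}}$. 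Splitting on the sign of $\hat{A}_t$ and on whether $w_k^{\text{turn}}(\theta)$ lies inside $[1-\epsilon,1+\epsilon]$, one verifies that the summand equals $w_k^{\text{turn}}(\theta)\hat{A}_t$ precisely on the event $\mathcal{B}_{\text{turn}}$ and equals a $\theta$-independent constant (either $(1+\epsilon)\hat{A}_t$ or $(1-\epsilon)\hat{A}_t$) on $\mathcal{B}_{\text{turn}}^c$. Hence its gradient is $\mathds{1}_{\{(k,t)\in\mathcal{B}_{\text{turn}}\}}\,\hat{A}_t\,\nabla_\theta w_k^{\text{turn}}(\theta)$.

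The second step evaluates $\nabla_\theta w_k^{\text{turn}}(\theta)$. Writing $w_k^{\text{turn}}(\theta)=\exp(L_k(\theta))$ with $L_k(\theta)=\frac{1}{|y^k|}\sum_{t=t_k^{\text{start}}}^{t_k^{\text{end}}}\log\frac{\pi_\theta(y_t\mid x,y_{<t})}{\pi_{\theta_{\mathrm{old}}}(y_t\mid x,y_{<t})}$, the key observation is that the token-level log-ratios collapse: by the autoregressive factorization $\sum_{t=t_k^{\text{start}}}^{t_k^{\text{end}}}\log\pi_\theta(y_t\mid x,y_{<t})=\log\pi_\theta(y^k\mid x,y^{<k})$, while the $\pi_{\theta_{\mathrm{old}}}$ terms are constants in $\theta$. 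Applying the chain rule to the exponential gives $\nabla_\theta w_k^{\text{turn}}(\theta)=w_k^{\text{turn}}(\theta)\,\frac{1}{|y^k|}\nabla_\theta\log\pi_\theta(y^k\mid x,y^{<k})$, which is exactly where the $1/|y^k|$ scaling in the Turn-Level Credit term originates.

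The final step is to reassemble. Substituting the per-token gradient back, the summand at $(k,t)$ becomes $\mathds{1}_{\{(k,t)\in\mathcal{B}_{\text{turn}}\}}\,\hat{A}_t\,w_k^{\text{turn}}(\theta)\,\frac{1}{|y^k|}\nabla_\theta\log\pi_\theta(y^k\mid x,y^{<k})$. Crucially, the factors $w_k^{\text{turn}}(\theta)$, $|y^k|$, and $\nabla_\theta\log\pi_\theta(y^k\mid x,y^{<k})$ depend only on the turn index $k$, not on the token position $t$, so they pull out of the inner sum over $t\in[t_k^{\text{start}},t_k^{\text{end}}]$. What remains inside is $\sum_{t=t_k^{\text{start}}}^{t_k^{\text{end}}}\mathds{1}_{\{(k,t)\in\mathcal{B}_{\text{turn}}\}}\hat{A}_t$, which is precisely the definition of $\hat{A}^k$. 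Collecting the terms yields Eq.~\ref{eq:turn_level_credit_assignment}.

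I expect the only genuine subtlety to lie in the first step: the clipped-min term is piecewise smooth but nondifferentiable on the measure-zero boundary sets where $w_k^{\text{turn}}(\theta)=1\pm\epsilon$ or where the two arguments of the $\min$ are tied. As in the standard PPO analysis, I would handle this by noting that the two pieces agree in value at these boundaries, so the objective is continuous and the active-set indicator may be treated as locally constant; the gradient identity then holds almost everywhere, which is the sense in which Eq.~\ref{eq:turn_level_credit_assignment} should be read. The remaining manipulations—the chain rule, the autoregressive collapse of the log-ratio sum, and factoring the turn-constant quantities out of the token sum—are routine.
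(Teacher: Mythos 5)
Your proof is correct and follows essentially the same route as the paper's: both decompose the clipped $\min$ via the indicator of $\mathcal{B}_{\text{turn}}$, use that clipped terms contribute zero gradient, differentiate the geometric-mean ratio to obtain the $w_k^{\text{turn}}(\theta)/|y^k|$ factor, and pull the turn-constant quantities out of the token sum to form $\hat{A}^k$. The only difference is bookkeeping---you differentiate directly in $\theta$, whereas the paper first takes partial derivatives with respect to the token log-probabilities and reassembles via the multivariable chain rule---and your explicit remark that the identity holds almost everywhere (off the measure-zero clipping boundaries) is, if anything, slightly more careful than the paper's treatment.
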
 
\vspace{-0.5\baselineskip}
\begin{proof}
    We refer to Appendix~\ref{app:proposed_ppo_grad} for the proof.
\end{proof}

Lemma~\ref{lm:proposed_ppo_grad} highlights the effect of turn-level importance sampling. As shown in Eq.~\ref{eq:turn_level_credit_assignment}, all tokens within a turn share the same aggregated advantage $\hat{A}^k$, while advantages and clipping are still computed token-wise.

This implies that the policy gradient assigns credit at the granularity of turns rather than individual tokens,
specifically due to the use of a shared turn-level importance weight. As a result, tokens within the same turn collectively reinforce or suppress the corresponding turn-level action,
instead of competing for credit at the token level.
Importantly, this modification does not alter how advantages are estimated or how clipping is applied.
Instead, it reshapes how token-level signals are aggregated in the gradient,
mitigating large gradient norms induced by token-level fluctuations while preserving fine-grained advantage estimation.

This turn-level structure arises from using a shared geometric-mean importance ratio within each turn and aggregating unclipped token-level contributions in $\mathcal{B}_{\text{turn}}$. This is consistent with our experiments and prior work~\citep{zhou2024archer,zeng2025reinforcing}. 

\textbf{Validation.} We compare token-level PPO in Eq.~\ref{eq:ppo_obj} and turn-level PPO in Eq.~\ref{eq:turn_level_PPO} on the search task using the Qwen2.5-7B base model. As shown in Fig.~\ref{fig:norm_result}, turn-level importance sampling substantially reduces the norm of the PPO policy gradient, bringing it from a logarithmic scale to a normal scale. This reduction effectively prevents the catastrophic gradient explosion observed in standard token-level PPO and leads to more stable optimization dynamics during training. Meanwhile, we observe that the performance curve of turn-level PPO still exhibits occasional sharp drops. These drops coincide with spikes in the gradient norm (approximately 1000--2000), indicating that instability can still arise even under turn-level importance sampling. As a result, turn-level PPO fails to consistently reach its best performance.

\subsection{Clipping-Triggered Normalization}
\label{sec:stable_train_clipping_bias}

While turn-level importance sampling alleviates gradient explosion by aligning credit assignment with the turn structure, the remaining instability suggests that granularity mismatch alone does not fully explain the observed optimization failures.
In highly off-policy regimes, the reliability of sampled data itself becomes a critical factor affecting training stability, independent of how credit is assigned.
{To better understand this effect, we begin with a general analysis of off-policy PPO updates by examining the gradient decomposition. This analysis motivates a normalization mechanism that targets unreliable off-policy updates and can be naturally integrated into clipping-based reinforcement learning algorithms.
In the following, we apply this mechanism to the turn-level PPO developed in Section~\ref{sec:turn_level_ppo}, yielding a more robust and stable optimization procedure.}

In this section, we investigate the growth of the policy gradient norm during training by examining the distribution of importance sampling ratios at the mid-training stage. As shown in the left panel of Fig.~\ref{fig:token_level_ppo_diagnostics}, the distribution is heavy-tailed, with some samples exhibiting extremely large or small importance sampling ratios, a phenomenon that typically arises when the training batch becomes highly off-policy. To mitigate this effect and avoid updates dominated by batches with high-variance importance sampling ratios, we introduce the second component of our algorithm, termed Clipping-Triggered Normalization. To motivate this design, we analyze the gradient of the  PPO objective by decomposing it according to whether clipping is active. To introduce what clipping bias is, we begin by defining the events under which clipping is inactive:

\begin{figure}[t]
    \centering
    \includegraphics[width=0.45\linewidth]{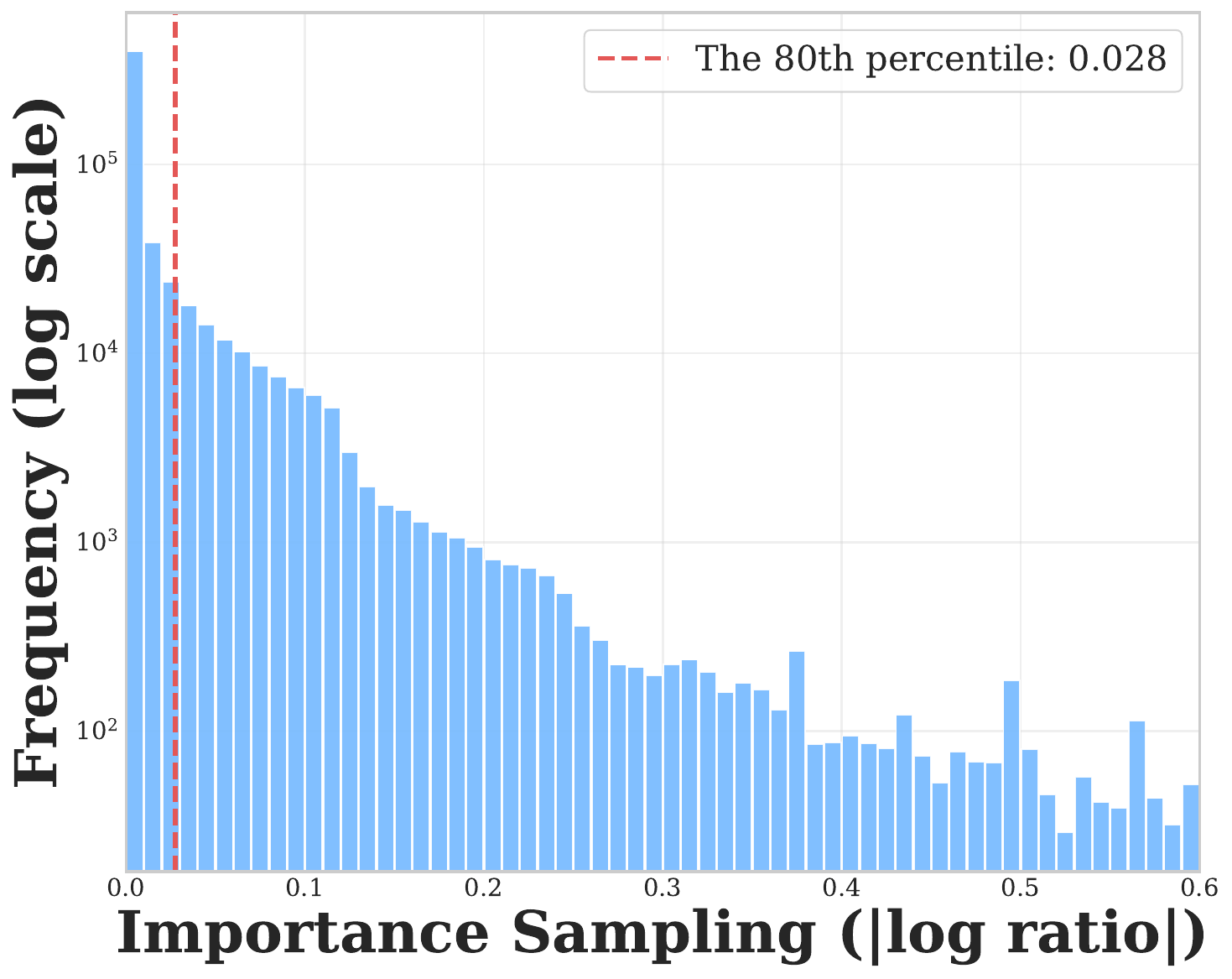}
    \hfill
    \includegraphics[width=0.45\linewidth]{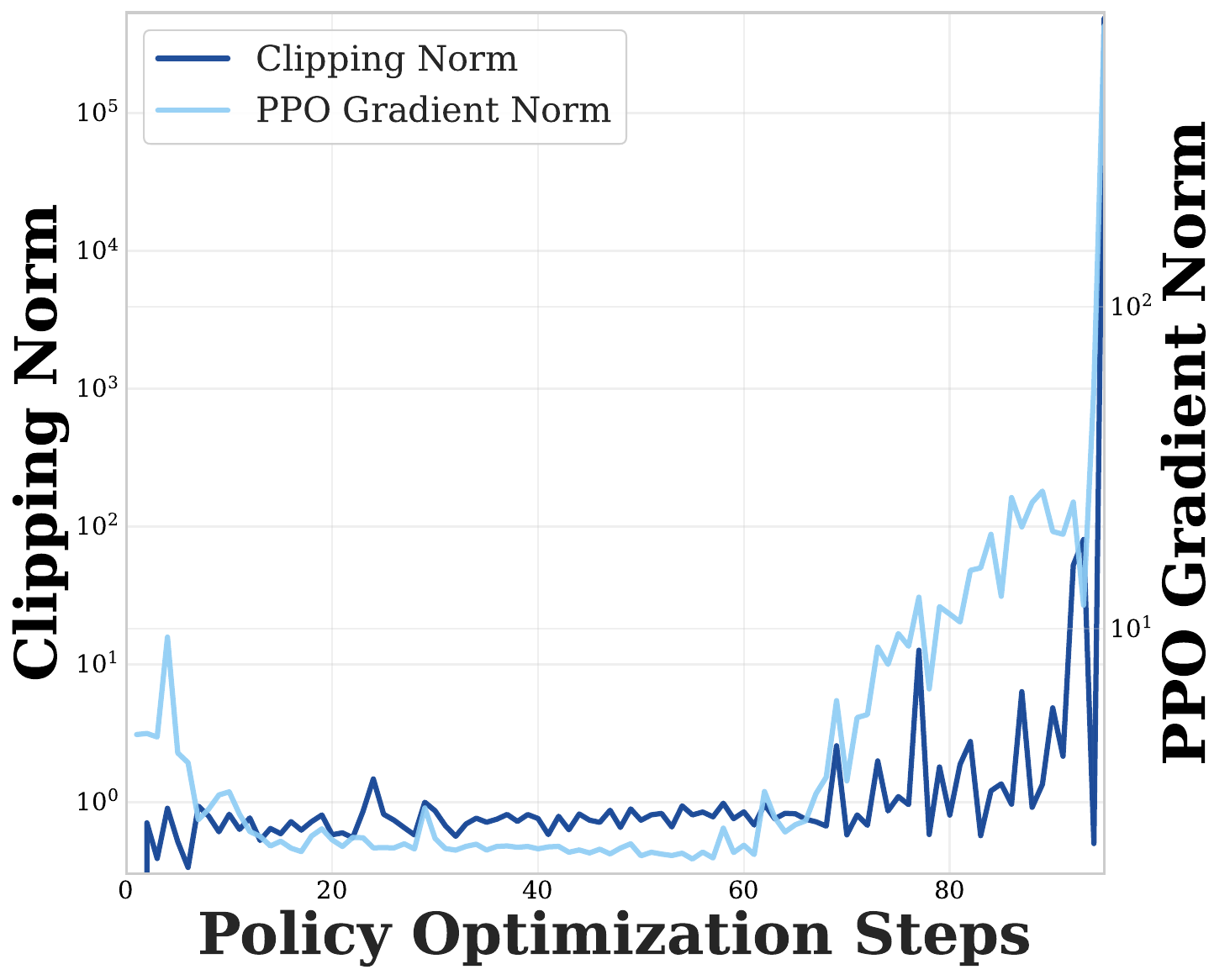}
    \caption{Additional diagnostic metrics for token-level PPO on Qwen2.5-7B.
    \textbf{Left:} Distribution of importance sampling ratios, shown as the absolute value of the log ratio across training. The pronounced heavy-tailed behavior indicates substantial variance in importance weights.
    \textbf{Right:} Evolution of the L2 norm of the clipping bias term alongside the PPO gradient norm over policy optimization steps. Both quantities exhibit rapid growth during training, highlighting increasing instability in the optimization dynamics.}
    \label{fig:token_level_ppo_diagnostics}
\end{figure}

\begin{align*}
\mathcal{B}^+ \coloneqq 
    \{\, t : \hat{A}_t \ge 0,\; w_t(\theta) \le 1 + \epsilon \,\}, 
    \\    \mathcal{B}^- \coloneqq 
    \{\, t : \hat{A}_t < 0,\; w_t(\theta) \ge 1 - \epsilon \,\}.
\end{align*} 
These sets correspond to token positions where the PPO surrogate objective reduces to the unclipped importance weighted objective. We define
$\mathcal{B} \coloneqq \mathcal{B}^+ \cup \mathcal{B}^- .$
The complement set $\mathcal{B}^c$ consists of all remaining token positions where clipping is active. Then we can formally decompose the PPO gradient as follows:
\begin{lemma}
    The PPO gradient can be decomposed as  {\begin{align}\label{eq:ppo_composition}
        \frac{\partial \mathcal{J}_{\mathrm{PPO}}(\theta)}{\partial \log \pi_\theta} = & 
        \underbrace{\mathbb{E}\bigg[\frac{1}{|y|} \!\sum_{t=1}^{|y|} w_t \nabla_\theta \log \pi_\theta \hat{A}_t\bigg]}_{\text{Policy Gradient}} -   \underbrace{\mathbb{E}\!\bigg[\frac{1}{|y|} \sum_{t=1}^{|y|} \mathds{1}_{t \in \mathcal{B}^c} \, w_t\, \hat{A}_t \bigg]}_{\text{$C(\theta)$: Clipping Bias}},
    \end{align}}
where $\hat{A}_t$ denotes the advantage estimate by the critic using the GAE method.
\label{lm:ppo_grad_decomp}
\end{lemma}
\vspace{-0.6\baselineskip}
\begin{proof}
    We refer to Appendix~\ref{app:ppo_grad_decomp} for the proof of Lemma~\ref{lm:ppo_grad_decomp}.
\end{proof}
\vspace{-0.5\baselineskip}
From the gradient decomposition in Lemma~\ref{lm:ppo_grad_decomp}, the PPO gradient can be expressed as the sum of a standard policy gradient term and a clipping bias term $C(\theta)$. 
The former corresponds to the vanilla off-policy policy gradient without clipping, where all tokens contribute uniformly to the update, while the latter arises from the indicator $\mathds{1}_{t \in \mathcal{B}^{c}}$ and captures the contribution of tokens for which clipping is active.  The clipping mechanism stabilizes optimization by suppressing excessively large updates, and the resulting clipping bias term selectively removes gradient contributions from clipped tokens, introducing bias into the gradient estimate.

\textbf{Validation.}
We study how the clipping bias evolves during training and how it relates to the PPO gradient by running token-level PPO on the \mbox{Qwen2.5-7B} base model. 
Specifically, we track the L2 norms of the clipping bias term $C(\theta)$ and the PPO gradient $\nabla \mathcal{J}_{\mathrm{PPO}}$ over optimization steps (right panel of Fig.~\ref{fig:token_level_ppo_diagnostics}).
We find that the clipping bias and the PPO gradient norm are clearly correlated and tend to increase together as training proceeds.
Moreover, the clipping bias shows larger fluctuations in later stages of training.
A large clipping bias means that many clipped tokens in the current batch contribute gradients with large norms, which indicates that this batch is strongly off-policy.
As a result, the policy updates become less stable in later training, when off-policy effects are more pronounced.

\begin{algorithm}[t] 
\caption{{\it \pponame}} 
\small
\begin{algorithmic} 
    \STATE \textbf{Input:} Initialize policy $\pi_{0}$, step size $\eta$, dataset $\mathcal{D}$, and maximum generation length $L$. 
    \FOR{iteration $k = 0,1,\ldots,K-1$} 
        \STATE \textbf{Data Sampling I:} Sample a batch of queries $x \sim \mathcal{D}$ and generate agent trajectories $\tau \coloneqq y_{0:L} \sim \pi_k$.  
        \STATE \textbf{Data Sampling II:} For each trajectory, split it into turn-level state–action pairs $(s_0,a_0),(s_1,a_1),\ldots$ (e.g., using a loss mask or \texttt{<eot>} tokens).  
        \STATE \textbf{Gradient Estimation:} Compute the gradient as in Eq.~\ref{eq:turn_level_PPO}.
        \STATE \textbf{Clipping-Triggered Normalization:} Compute the clipping bias norm and form the surrogate gradient in Eq.~\ref{eq:ST_PPO_mc} with sampled agent trajectories  $\tau$.  
        \STATE \textbf{Policy Improvement:} Update the policy using the surrogate gradient.  
        \STATE \textbf{Policy Evaluation:} Update the critic using the TD error. 
    \ENDFOR
\end{algorithmic} 
\label{alg:ST_PPO} 
\end{algorithm}

{Motivated by this observation, we incorporate a new stabilization principle into the turn-level PPO formulation (Section~\ref{sec:turn_level_ppo}).}
Specifically, we propose a surrogate gradient estimator that reweights each sampled generation according to its clipping bias, thereby adaptively downweighting highly off-policy updates and suppressing destabilizing gradients.
From a scaling perspective, turn-level importance sampling treats each turn as a single optimization unit, weighting it by the turn-level importance ratio and the cumulative unclipped advantage, normalized by turn length.
This results in a surrogate objective for turn-level PPO with an explicit clipping-bias correction, characterized by the following clipping bias term:

\begin{align}
    C_{\text{turn}}(\theta)
    :=
    \mathbb{E}_{\tau \sim \pi_{\theta}}
    \!\left[
    \frac{1}{|y|}
    \sum_{t=1}^{|y|}
    \mathds{1}_{t \in \mathcal{B}_{\text{turn}}^{c}}
    \, w_t^{\text{turn}}(\theta)
    \, \hat{A}_t
    \right].
    \label{eq:Cturn_def}
\end{align}

Intuitively, the norm of the clipping bias reflects the degree to which a batch is affected by off-policy samples: a larger clipping bias indicates that a substantial fraction of updates are dominated by clipped, and hence unreliable, importance ratios. Rescaling the policy update by the inverse of this norm therefore adaptively reduces the effective step size when off-policy variance dominates.

Based on this idea, we define the stabilized policy gradient as
\begin{align}
    \nabla_{\theta} \mathcal{J}_{\mathrm{SO\text{-}PPO}}(\theta)
    \coloneqq
    \frac{1}{\|C_{\text{turn}}(\theta)\|_2}
    \nabla_{\theta} \mathcal{J}_{\mathrm{Turn\text{-}PPO}}(\theta).
    \label{eq:ST_PPO}
\end{align}

In expectation, both the turn-level PPO gradient and the clipping bias are defined over trajectories sampled from the same policy:
\begin{align}
\nabla_{\theta}\mathcal{J}_{\mathrm{Turn\text{-}PPO}}(\theta)
&:= 
\mathbb{E}_{\tau \sim \pi_{\theta}}
\!\left[
\nabla_{\theta}\mathcal{J}_{\mathrm{Turn\text{-}PPO}}(\theta;\tau)
\right], \\
C_{\text{turn}}(\theta)
&:= 
\mathbb{E}_{\tau \sim \pi_{\theta}}
\!\left[
C_{\text{turn}}(\theta;\tau)
\right].
\end{align}

In practice, these expectations are approximated using the same mini-batch of sampled trajectories, yielding the following Monte Carlo estimator:
\begin{align}
    \nabla_{\theta} \hat{\mathcal{J}}_{\mathrm{SO\text{-}PPO}}(\theta)
    \;:=\;
    \frac{1}{\max\!\left(\left\|\hat{C}_{\text{turn}}(\theta)\right\|_2, \delta\right)}
    \nabla_{\theta}\hat{\mathcal{J}}_{\mathrm{Turn\text{-}PPO}}(\theta),
    \label{eq:ST_PPO_mc}
\end{align}
where both $\hat{C}_{\text{turn}}(\theta)$ and
$\nabla_{\theta}\hat{\mathcal{J}}_{\mathrm{Turn\text{-}PPO}}(\theta)$
are computed from the same set of trajectories
$\{\tau^{(i)}\}_{i=1}^{B}$ sampled from the current policy
$\tau^{(i)} \sim \pi_{\theta}$. The constant $\delta>0$ enforces a lower bound on the normalization factor, preventing excessively large updates when the estimated clipping bias is small and we set $\delta = 1$ in all experiments.

Fig.~\ref{fig:framework_STPPO} illustrates the two components of our proposed algorithm \pponame{} and shows how it effectively reduces the norm of the policy gradient. 
We then present \pponame{} in Algorithm~\ref{alg:ST_PPO}.

\subsection{Extension to GRPO}

Although our analysis in Lemma~\ref{lm:ppo_grad_decomp} focuses on PPO, the core stabilization components are not specific to a particular algorithm and can be readily transferred to other reinforcement learning methods. In this subsection, we use GRPO as a representative example to demonstrate the generality of our framework.

Notably, the clipping-triggered normalization can be incorporated into the GRPO framework,
yielding a stabilized variant that we refer to as \grponame.
Unlike standard PPO, GRPO performs credit assignment at the trajectory level via its group-level surrogate.
We therefore forego turn-level importance sampling in this context.
Formally, GRPO optimizes an objective of the form:
\begin{equation}\label{eq:GRPO_objective}
J_{\mathrm{GRPO}}(\theta)
=
\mathbb{E}\Big[
\min\!\Big(
w(\theta)\,\hat{A}^{\mathrm{grp}},
\operatorname{clip}\!\big(w(\theta), 1-\epsilon, 1+\epsilon\big)\,\hat{A}^{\mathrm{grp}}
\Big)
\Big],
\end{equation}

Following GRPO \citep{shao2024deepseekmath}, importance sampling is performed at the token level. For a trajectory $\tau = (y_1,\ldots,y_{|y|})$ sampled from the old policy $\pi_{\theta_{\mathrm{old}}}$, the token-level importance ratio is defined as
\[
w_t(\theta)
:=
\frac{\pi_\theta(y_t \mid x, y_{<t})}
     {\pi_{\theta_{\mathrm{old}}}(y_t \mid x, y_{<t})}.
\]
In the GRPO surrogate objective, this token-level ratio is combined with a group-level advantage that
is shared across all tokens within the same trajectory.
Specifically, given a group of $G$ sampled responses $\{\tau_i\}_{i=1}^G$ with trajectory-level rewards
$\{r_i\}_{i=1}^G$, the group advantage is defined as
\[
\hat{A}^{\mathrm{grp}}(\tau_i)
:=
\frac{r_i - \operatorname{mean}(\{r_j\}_{j=1}^G)}
     {\operatorname{std}(\{r_j\}_{j=1}^G)} ,
\]
and is assigned uniformly to all tokens $t$ in $\tau_i$.
To apply the proposed clipping-triggered normalization mechanism, we compute the clipping bias using these token-level importance
ratios together with the shared group-level advantages.

\begin{equation}
    C_{\mathrm{GRPO}}(\theta)
    :=
    \mathbb{E}\!\left[
    \frac{1}{|y|}
    \sum_{t=1}^{|y|}
    \mathds{1}_{t \in \mathcal{B}^c}\,
    w_t(\theta)\,\hat{A}^{\mathrm{grp}}
    \right].
\end{equation}

Here, $\mathcal{B}$ is defined analogously to the clipping set in PPO, with $\hat{A}^{\mathrm{grp}}$ replacing the advantage, and $\mathcal{B}^{c}$ denotes its complement, which measures the contribution of samples whose importance ratios fall outside the trust region. The stabilized GRPO update is then defined analogously to \pponame:
\[
\nabla_\theta J_{\mathrm{SO\text{-}GRPO}}(\theta)
:=
\frac{1}{\big\| C_{\mathrm{GRPO}}(\theta)\big\|_2}\,
\nabla_\theta J_{\mathrm{GRPO}}(\theta).
\]

Similar to SO-PPO, both the GRPO gradient and its clipping bias admit Monte Carlo estimators
based on the same set of sampled trajectories.
Specifically, the GRPO gradient and clipping bias can be written in expectation as
\[
\nabla_\theta J_{\mathrm{GRPO}}(\theta)
:= \mathbb{E}_{\tau \sim \pi_\theta}\!\left[ \nabla_\theta J_{\mathrm{GRPO}}(\theta;\tau) \right],
\qquad
C_{\mathrm{GRPO}}(\theta)
:= \mathbb{E}_{\tau \sim \pi_\theta}\!\left[ C_{\mathrm{GRPO}}(\theta;\tau) \right].
\]
In practice, both quantities are estimated using the same mini-batch of sampled trajectories, yielding the Monte Carlo estimator
\begin{align}\label{eq:GRPO_MC}
\nabla_\theta \hat{J}_{\mathrm{SO\text{-}GRPO}}(\theta)
:= \frac{1}{\max(\lVert \hat{C}_{\mathrm{GRPO}}(\theta) \rVert_2, \delta)}
\, \nabla_\theta \hat{J}_{\mathrm{GRPO}}(\theta),
\end{align}
where $\hat{C}_{\mathrm{GRPO}}(\theta)$ and $\nabla_\theta \hat{J}_{\mathrm{GRPO}}(\theta)$ are computed
from the same sampled groups.
This ensures a consistent implementation of clipping-triggered normalization across algorithms.

\begin{algorithm}[t] 
\caption{{\it \grponame}} 
\small
\begin{algorithmic} 
    \STATE \textbf{Input:} Initialize policy $\pi_{0}$, step size $\eta$, dataset $\mathcal{D}$, group size $G$, and maximum generation length $L$. 
    \FOR{iteration $k = 0,1,\ldots,K-1$} 
        \STATE \textbf{Data Sampling I:} Sample a batch of queries $x \sim \mathcal{D}$. 
        \STATE \textbf{Data Sampling II:} For each query $x$, sample a group of $G$ responses $\{\tau_i \coloneqq y^{(i)}_{0:L}\}_{i=1}^G \sim \pi_k(\cdot|x)$.
        \STATE \textbf{Reward \& Advantage:} Compute rewards $\{r(x,\tau_i)\}_{i=1}^G$ and form group advantages $\{\hat{A}^{\mathrm{grp}}(\tau_i)\}_{i=1}^G$ (e.g., normalized within the group).
        \STATE \textbf{Gradient Estimation:} Compute the GRPO gradient as in Eq.~\ref{eq:GRPO_objective} (token-level ratios with shared group advantage).
        \STATE \textbf{Clipping-Triggered Normalization:} Compute the clipping bias norm and form the surrogate gradient for \grponame\ (Eq.~\ref{eq:GRPO_MC}) using the sampled groups $\{\tau_i\}_{i=1}^G$.
        \STATE \textbf{Policy Improvement:} Update the policy using the surrogate gradient.
    \ENDFOR
\end{algorithmic} 
\label{alg:SO_GRPO} 
\end{algorithm}

This yields a lightweight yet effective variant: \grponame~ rescales the Monte Carlo policy gradient
by the inverse clipping-bias norm, thereby downweighting trajectory groups that exhibit extreme off-policy behavior.
Since GRPO already performs credit assignment at the trajectory level via group advantages, we do not introduce additional turn-level importance sampling. The algorithm of \grponame~ is summarized in Alg.\ref{alg:SO_GRPO}. Before concluding this section, we briefly discuss the advantages of the proposed algorithm.

\textbf{Stability.} The proposed framework stabilizes training by reducing the magnitudes of the policy gradients through two complementary mechanisms. Turn-level credit assignment mitigates variance amplification from misaligned token-level updates, while clipping-triggered normalization adaptively downweights highly off-policy batches via their clipping bias. Together, these components suppress extreme gradient spikes and yield more stable optimization dynamics.

\textbf{Generality.} The formulation is flexible enough to recover existing LLM training methods (e.g., GRPO) through specific choices of advantage design. This generality allows algorithms developed for advantage estimation to be directly applied as special cases within our framework.

\section{Experiments}
We conduct comprehensive numerical evaluations of our proposed \aname~ algorithm (Algorithm~\ref{alg:ST_PPO}) and compare its performance against the state-of-the-art baseline, Search-R1~\citep{jin2025search}. Our experimental results demonstrate two key advantages of the proposed approach: (1) the combination of turn-level importance sampling and clipping-bias correction significantly enhances training stability compared to standard PPO, preventing the performance collapses commonly observed in multi-turn agent training, and (2) \pponame~ and \grponame~ maintain consistently lower clipping ratios throughout training, indicating more reliable gradient updates.

\textbf{Experimental Setup.} 
Following the experimental setup of Search-R1~\citep{jin2025search}, we conduct our experiments using the Qwen-2.5-7B (Base) model~\citep{bai2023qwen} as the base policy. For retrieval, we employ the 2018 Wikipedia dump~\citep{karpukhin2020dense} as our knowledge source and the E5 embeddings~\citep{wang2022text} as the retriever. To ensure fair comparison with existing methods, we follow~\citet{lin2023ra} and consistently retrieve three passages across all retrieval-based approaches. We evaluate \aname~ on two benchmark datasets: (1) Natural Questions (NQ)\citep{kwiatkowski2019natural} and (2) HotpotQA\citep{ho2020constructing}. These datasets present diverse multi-hop reasoning and search challenges that require effective coordination between information retrieval and reasoning steps, making them ideal testbeds for evaluating the stability and effectiveness of our turn-level optimization approach. For complete implementation details, refer to Appendix~\ref{app:implment_details}.

\textbf{Evaluation.} 
For evaluation, we evaluate on the test sets of the NQ and HotpotQA datasets to assess model performance on each domain. Exact Match (EM) is used as the primary evaluation metric, which measures the percentage of predictions that exactly match the ground-truth answers after standard normalization. EM provides a strict assessment of whether our multi-turn approach successfully retrieves and synthesizes correct information through the search and reasoning process.

\textbf{Training Stability and Performance.} Fig.~\ref{fig:exp_search} compares the training dynamics of standard PPO and \pponame~ on the NQ and HotpotQA datasets. All methods exhibit rapid initial improvement, indicating that PPO can identify effective policies early in training. However, their long-term behaviors diverge substantially. Token-level PPO shows pronounced instability: after reaching peak performance, the policy undergoes sharp collapses, resulting in performance that is significantly worse than that of earlier checkpoints. This degradation is consistent with our analysis, which shows that highly off-policy training batches with large importance sampling ratios can destabilize gradient updates. As a result, careful early stopping is required to retain the best-performing checkpoint, complicating the training procedure. In contrast, \aname~ maintains steady performance improvements throughout the entire optimization horizon without experiencing such collapses.

\textbf{Ablation Study.} In Appendix \ref{app:sec_ablation}, we further evaluate our algorithm under different off-policy settings by fixing the total training batch size while decreasing the minibatch size, which increases the degree of off-policyness during optimization. The results show that the proposed algorithm \aname~ remains stable under these more challenging conditions. Moreover, Figure~\ref{fig:KL_result} reports consistently smaller KL divergence throughout training. Together, these findings confirm that our method effectively reduces the magnitude of unstable updates and improves training stability.

This stability stems from two key mechanisms: (1) the turn-level importance sampling aligns credit assignment with the natural structure of reasoning-search interactions, reducing the granularity mismatch that contributes to instability, and (2) the clipping-triggered normalization adaptively scales gradient updates, reducing the influence of tokens with high variance importance sampling. As a result, \pponame~ achieve both competitive peak performance and robust convergence without requiring early stopping or careful checkpoint selection.

To further validate the effectiveness of our GRPO variant, we compare GRPO, GSPO~\citep{zheng2025group}, and the proposed \grponame\ on the NQ dataset. GSPO is a recently proposed stabilization method that replaces token-level importance ratios in GRPO with sequence-level likelihood ratios. As shown in Fig.~\ref{fig:GRPO_variant}, \grponame~ achieves substantially improved training stability over GRPO and further outperforms GSPO. While GSPO already improves stability by adopting sequence-level importance sampling, our results show that clipping-triggered normalization provides additional and more consistent stabilization benefits in off-policy training.

\begin{figure}[t]
    \centering
    \begin{subfigure}{0.49\linewidth}
        \centering
        \includegraphics[width=\linewidth]{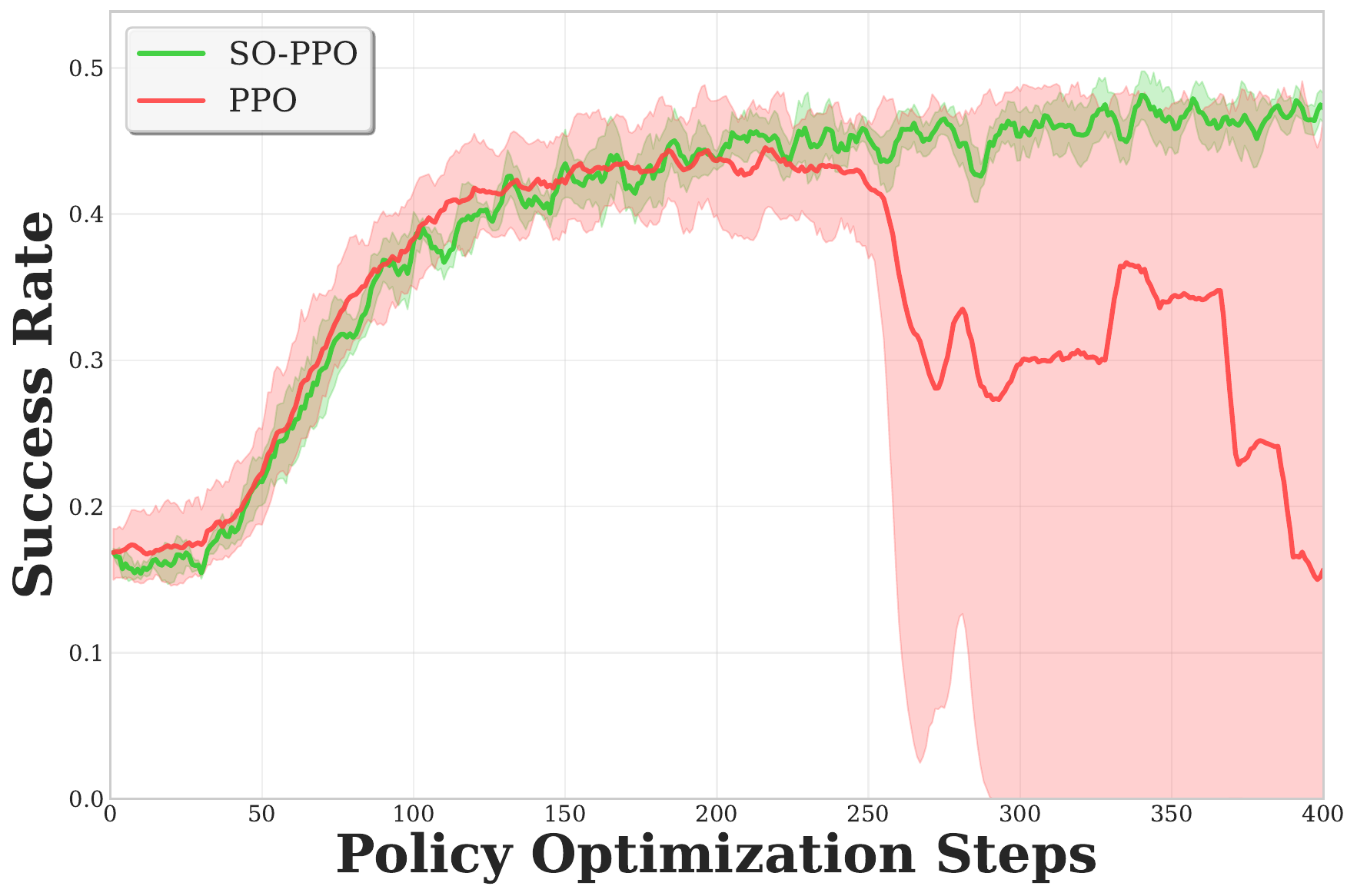}
        \caption{NQ dataset.}
        \label{fig:nq_result}
    \end{subfigure}
    \begin{subfigure}{0.49\linewidth}
        \centering
        \includegraphics[width=\linewidth]{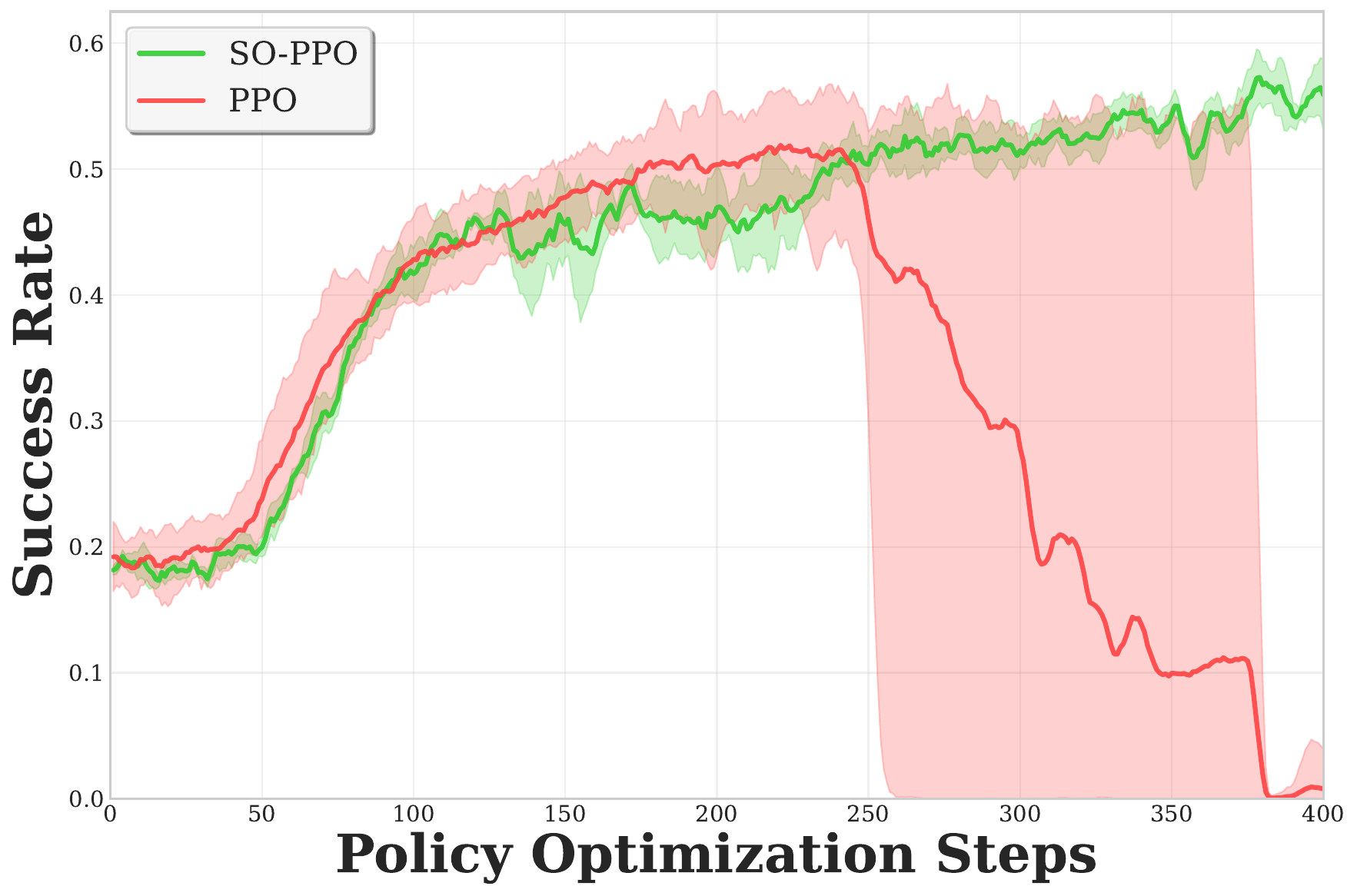}
        \caption{HotpotQA dataset.}
        \label{fig:hotpotqa_result}
    \end{subfigure}
    \hfill
    \caption{Experimental results of Qwen-2.5-7B policy models. Results are averaged over three trials. We report the average success rate on the NQ and HotpotQA datasets.}
    \label{fig:exp_search}
\end{figure}

\begin{figure}[!t]
    \centering
    \begin{subfigure}{0.49\linewidth}
        \centering
        \includegraphics[width=\linewidth]{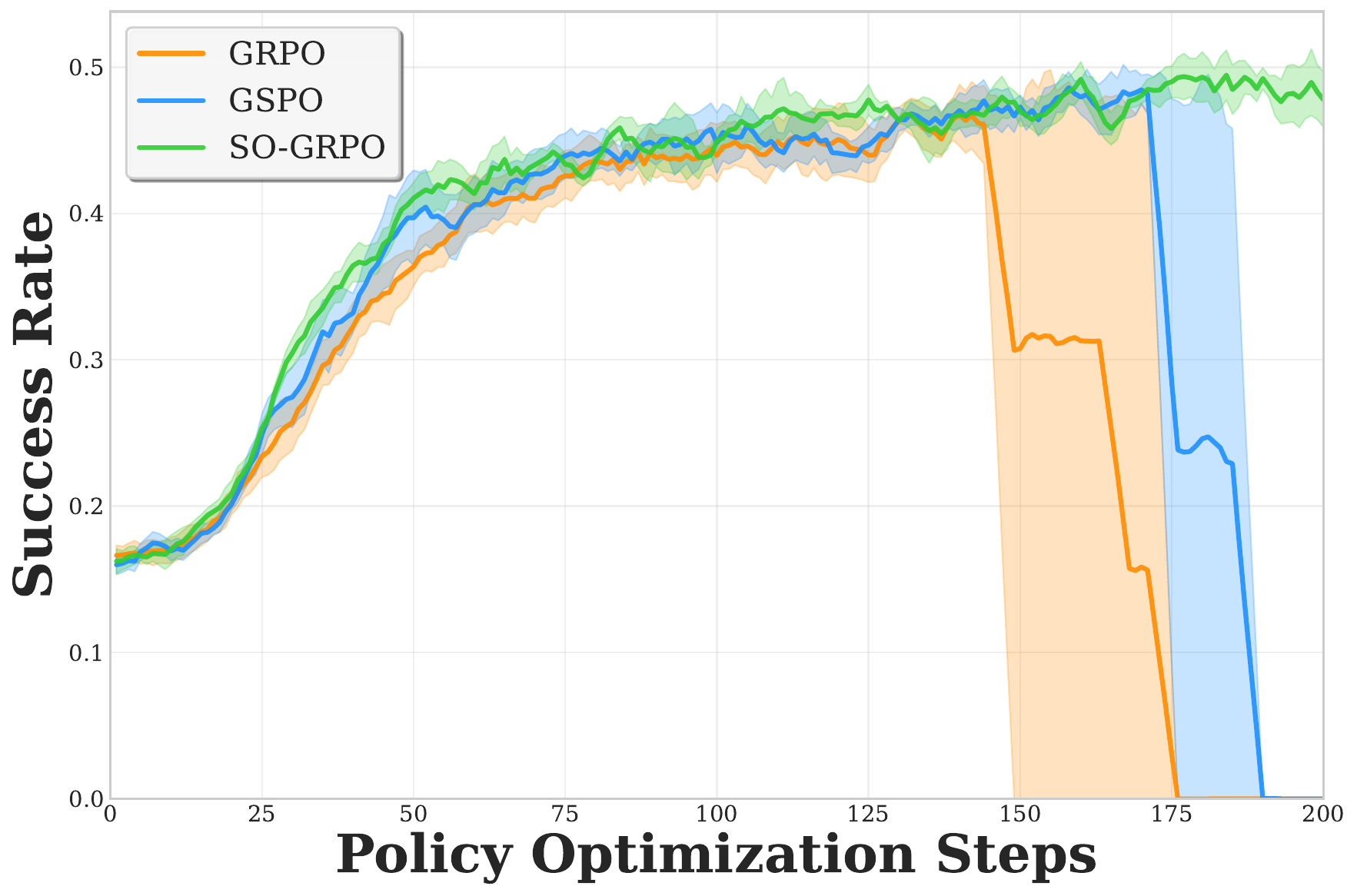}
        \caption{Scores on the NQ dataset.}
        \label{fig:clip_ratio_hotpotqa}
    \end{subfigure}
    \begin{subfigure}{0.49\linewidth}
        \centering
        \includegraphics[width=\linewidth]{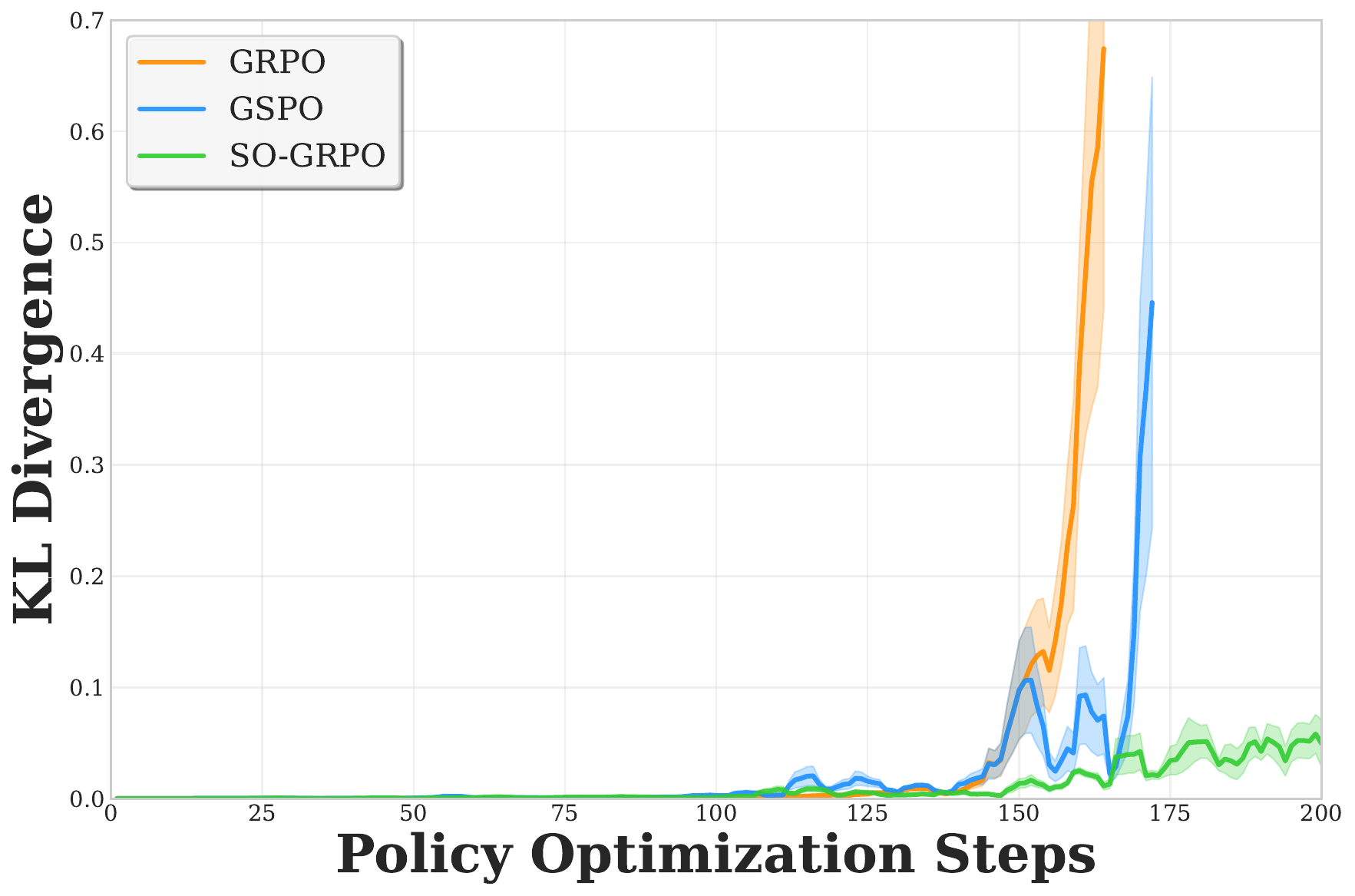}
        \caption{KL divergence.}
        \label{fig:kl_divergence_hotpotqa}
    \end{subfigure}
    \hfill
    \caption{Comparison of GRPO, GSPO, and \grponame~ on the NQ dataset.  (a) Success rates show that GRPO and GSPO collapse during training, whereas \grponame~ remains stable.  (b) KL divergence exhibits large spikes for GRPO and GSPO, while \grponame~ stays near zero.}
    \label{fig:GRPO_variant}
\end{figure}

We further evaluate the performance of \pponame~ and \grponame~ on medical tasks to assess their generalization ability in a domain distinct from open-domain search. Specifically, we use the AlphaMed19K \citep{liu2025beyond} dataset for training, with Wikipedia serving as the retrieval source. To ensure consistency across experiments and datasets, we adopt the prompt template shown in Appendix~\ref{tb:medical_prompt} for all training runs. The hyperparameter configuration for the medical dataset follows the same settings used in the NQ and HotpotQA experiments.

\textbf{Baselines.} We compare our method against several 8B-scale baselines commonly used in agentic tasks:
(1)~\textbf{RAG}: Retrieval-augmented generation \citep{glass2022re2g,li2025retrollm} using the tool-following instructions, where external documents are retrieved to supplement the model’s responses. Note that the \mbox{Llama-3.1-8B-Instruct} model is used without additional training in this setting.
(2)~\textbf{COT}: Chain-of-Thought prompting~\citep{wei2022chain}, which encourages the model to generate explicit reasoning steps before producing an answer.
(3)~\textbf{Search-R1}: Search-R1 framework~\citep{jin2025search}, where we train the model using PPO and evaluate using the checkpoint at the 150th optimization step (2 epochs).
(4)~\textbf{\aname}: Our proposed algorithm, evaluated using the checkpoint at the 150th step (2 epochs).

\textbf{Evaluation Datasets.} To comprehensively evaluate the algorithm’s performance on medical tasks, we consider both in-domain and out-of-domain benchmarks. For in-domain evaluation, we use MedQA \citep{jin2020disease} and MedMCQA \citep{pal2022medmcqa}. For out-of-domain evaluation, we include PubMedQA \citep{jin2019pubmedqa}, the medical subset of MMLU \citep{wang2024mmlu}, and MedXpert \citep{zuo2025medxpertqa}. We exclude GPQA~\citep{rein2024gpqa} as it does not contain a medically relevant subset.

\textbf{Performance.} Table~\ref{tab:medical-benchmarks-safe-min} demonstrates that our training approach not only stabilizes multi-turn optimization but also enables models that initially lack tool-use capability to reliably perform retrieval and acquire domain knowledge. The poor performance of RAG highlights this limitation when retrieval is introduced without reinforcement learning. In contrast, both \pponame~ and \grponame~ substantially improve performance across medical QA benchmarks. Specifically, \pponame~ achieves the best average accuracy among PPO-based retrieval-enhanced methods ($53.88\%$), consistently outperforming the Search-R1 baseline on both in-domain and out-of-domain evaluations. Moreover, \grponame~ further improves overall performance, attaining the highest average accuracy ($55.34\%$) across all retrieval-augmented and RL-enhanced approaches. These results indicate that our framework not only stabilizes training but also translates improved optimization dynamics into stronger generalization and task performance.
\begin{table*}[!t]
\centering
\footnotesize
\setlength{\tabcolsep}{6pt}
\renewcommand{\arraystretch}{1.10}
\caption{
Performance of 8B medical LLMs across in-domain\textsuperscript{\textdagger} 
and out-of-domain\textsuperscript{\textasteriskcentered} benchmarks (accuracy \%). 
Bold numbers indicate the best performance, while underlined numbers indicate the second-best performance.}
\label{tab:medical-benchmarks-safe-min}

\begin{tabular}{lcccccc}
\toprule
\multirow{2}{*}{\textbf{Model}} &
\multicolumn{2}{c}{\textbf{In-Domain\textsuperscript{\textdagger}}} &
\multicolumn{3}{c}{\textbf{Out-of-Domain\textsuperscript{\textasteriskcentered}}} &
\multirow{2}{*}{\textbf{Avg.}} \\
\cmidrule(lr){2-3} \cmidrule(lr){4-6}
 & MedQA & MedMCQA & PubMedQA & MMLU-M & MedXpert & \\
\midrule

\rowcolor{gray!12}
\multicolumn{7}{c}{\textit{Inference-Based Methods (No Retrieval)}} \\
\midrule
Llama-3.1-8B-Instruct (Direct Inference)  & 45.20 & 52.40 & 62.00 & 40.70 & 13.00 & 42.66 \\
Llama-3.1-8B-Instruct (CoT)               & 48.62 & 59.80 & 64.40 & 54.20 & 13.02 & 48.01 \\
MedLlama-3-8B (CoT)                        & 66.60 & 53.40 & 64.40 & 45.70 & 11.04 & 48.23 \\
\midrule

\rowcolor{gray!12}
\multicolumn{7}{c}{\textit{Retrieval-Augmented and RL-Enhanced Methods}} \\
\midrule
Llama-3.1-8B-Instruct (RAG)           & 8.90  & 11.30 & 16.80 & 4.90  & 9.20  & 10.22 \\
Llama-3.1-8B-Instruct (Search-R1)              & 53.40 & 54.00 & 60.20 & 49.50 & 9.77  & 45.37 \\
\rowcolor{blue!5}
Llama-3.1-8B-Instruct (\pponame, Ours) 
 & \underline{64.80} & \underline{60.80} 
 & \underline{68.31} & \underline{59.78} 
 & \underline{15.75} & \underline{53.88} \\

\rowcolor{blue!5}
Llama-3.1-8B-Instruct (\grponame~, Ours) 
 & \textbf{67.32} & \textbf{62.40} 
 & \textbf{68.65} & \textbf{61.24} 
 & \textbf{17.06} & \textbf{55.34} \\
\bottomrule
\end{tabular}

\vspace{3pt}
\footnotesize
\textsuperscript{\textdagger}In-domain benchmarks.  
\textsuperscript{\textasteriskcentered}Out-of-domain benchmarks.
\end{table*}
\section{Conclusion}
We investigated the instability of off-policy reinforcement learning in multi-turn LLM agent training and identified two key causes: the mismatch between token-level optimization and turn-structured interactions, and the accumulation of high-variance updates under increasingly off-policy training. To address these issues, we proposed \textbf{SORL}, a framework that combines turn-level credit assignment with clipping-triggered normalization to stabilize policy optimization. Instantiations of SORL with PPO and GRPO effectively suppress unreliable off-policy updates and yield more stable optimization dynamics. Experiments on multi-turn search and medical QA benchmarks demonstrate that SORL consistently improves training stability and robustness without requiring heuristic tuning or early stopping.

\clearpage
\bibliography{iclr2026_conference}

@inproceedings{jin2025search,
    title={Search-r1: Training {LLM}s to reason and leverage search engines with reinforcement learning},
    author={Jin, Bowen and Zeng, Hansi and Yue, Zhenrui and Yoon, Jinsung and Arik, Sercan and Wang, Dong and Zamani, Hamed and Han, Jiawei},
    booktitle={Second Conference on Language Modeling (COLM)},
    year={2025}
}

@article{schulman2017proximal,
  title={Proximal policy optimization algorithms},
  author={Schulman, John and Wolski, Filip and Dhariwal, Prafulla and Radford, Alec and Klimov, Oleg},
  journal={arXiv preprint arXiv:1707.06347},
  year={2017}
}

@article{shao2024deepseekmath,
  title={Deepseekmath: Pushing the limits of mathematical reasoning in open language models},
  author={Shao, Zhihong and Wang, Peiyi and Zhu, Qihao and Xu, Runxin and Song, Junxiao and Bi, Xiao and Zhang, Haowei and Zhang, Mingchuan and Li, YK and Wu, Yang and others},
  journal={arXiv preprint arXiv:2402.03300},
  year={2024}
}

@article{bai2023qwen,
  title={Qwen technical report},
  author={Bai, Jinze and Bai, Shuai and Chu, Yunfei and Cui, Zeyu and Dang, Kai and Deng, Xiaodong and Fan, Yang and Ge, Wenbin and Han, Yu and Huang, Fei and others},
  journal={arXiv preprint arXiv:2309.16609},
  year={2023}
}

@inproceedings{karpukhin2020dense,
  title={Dense Passage Retrieval for Open-Domain Question Answering.},
  author={Karpukhin, Vladimir and Oguz, Barlas and Min, Sewon and Lewis, Patrick SH and Wu, Ledell and Edunov, Sergey and Chen, Danqi and Yih, Wen-tau},
  booktitle={EMNLP (1)},
  pages={6769--6781},
  year={2020}
}

@inproceedings{lin2023ra,
  title={Ra-dit: Retrieval-augmented dual instruction tuning},
  author={Lin, Xi Victoria and Chen, Xilun and Chen, Mingda and Shi, Weijia and Lomeli, Maria and James, Richard and Rodriguez, Pedro and Kahn, Jacob and Szilvasy, Gergely and Lewis, Mike and others},
  booktitle={The Twelfth International Conference on Learning Representations},
  year={2023}
}

@article{wang2022text,
  title={Text embeddings by weakly-supervised contrastive pre-training},
  author={Wang, Liang and Yang, Nan and Huang, Xiaolong and Jiao, Binxing and Yang, Linjun and Jiang, Daxin and Majumder, Rangan and Wei, Furu},
  journal={arXiv preprint arXiv:2212.03533},
  year={2022}
}

@article{kwiatkowski2019natural,
  title={Natural questions: a benchmark for question answering research},
  author={Kwiatkowski, Tom and Palomaki, Jennimaria and Redfield, Olivia and Collins, Michael and Parikh, Ankur and Alberti, Chris and Epstein, Danielle and Polosukhin, Illia and Devlin, Jacob and Lee, Kenton and others},
  journal={Transactions of the Association for Computational Linguistics},
  volume={7},
  pages={453--466},
  year={2019},
  publisher={MIT Press One Rogers Street, Cambridge, MA 02142-1209, USA journals-info~…}
}

@article{ho2020constructing,
  title={Constructing a multi-hop qa dataset for comprehensive evaluation of reasoning steps},
  author={Ho, Xanh and Nguyen, Anh-Khoa Duong and Sugawara, Saku and Aizawa, Akiko},
  journal={arXiv preprint arXiv:2011.01060},
  year={2020}
}

@article{roux2025tapered,
  title={Tapered off-policy reinforce: Stable and efficient reinforcement learning for llms},
  author={Roux, Nicolas Le and Bellemare, Marc G and Lebensold, Jonathan and Bergeron, Arnaud and Greaves, Joshua and Fr{\'e}chette, Alex and Pelletier, Carolyne and Thibodeau-Laufer, Eric and Toth, S{\'a}ndor and Work, Sam},
  journal={arXiv preprint arXiv:2503.14286},
  year={2025}
}

@article{williams1992simple,
  title={Simple statistical gradient-following algorithms for connectionist reinforcement learning},
  author={Williams, Ronald J},
  journal={Machine learning},
  volume={8},
  number={3},
  pages={229--256},
  year={1992},
  publisher={Springer}
}

@article{zeng2025reinforcing,
  title={Reinforcing Multi-Turn Reasoning in LLM Agents via Turn-Level Credit Assignment},
  author={Zeng, Siliang and Wei, Quan and Brown, William and Frunza, Oana and Nevmyvaka, Yuriy and Hong, Mingyi},
  journal={arXiv preprint arXiv:2505.11821},
  year={2025}
}

@article{jaech2024openai,
  title={Openai o1 system card},
  author={Jaech, Aaron and Kalai, Adam and Lerer, Adam and Richardson, Adam and El-Kishky, Ahmed and Low, Aiden and Helyar, Alec and Madry, Aleksander and Beutel, Alex and Carney, Alex and others},
  journal={arXiv preprint arXiv:2412.16720},
  year={2024}
}

@article{el2025competitive,
  title={Competitive programming with large reasoning models},
  author={El-Kishky, Ahmed and Wei, Alexander and Saraiva, Andre and Minaiev, Borys and Selsam, Daniel and Dohan, David and Song, Francis and Lightman, Hunter and Clavera, Ignasi and Pachocki, Jakub and others},
  journal={arXiv preprint arXiv:2502.06807},
  year={2025}
}

@article{cui2025process,
  title={Process reinforcement through implicit rewards},
  author={Cui, Ganqu and Yuan, Lifan and Wang, Zefan and Wang, Hanbin and Li, Wendi and He, Bingxiang and Fan, Yuchen and Yu, Tianyu and Xu, Qixin and Chen, Weize and others},
  journal={arXiv preprint arXiv:2502.01456},
  year={2025}
}

@article{team2025kimi,
  title={Kimi k2: Open agentic intelligence},
  author={Team, Kimi and Bai, Yifan and Bao, Yiping and Chen, Guanduo and Chen, Jiahao and Chen, Ningxin and Chen, Ruijue and Chen, Yanru and Chen, Yuankun and Chen, Yutian and others},
  journal={arXiv preprint arXiv:2507.20534},
  year={2025}
}

@article{qian2025toolrl,
  title={Toolrl: Reward is all tool learning needs},
  author={Qian, Cheng and Acikgoz, Emre Can and He, Qi and Wang, Hongru and Chen, Xiusi and Hakkani-T{\"u}r, Dilek and Tur, Gokhan and Ji, Heng},
  journal={arXiv preprint arXiv:2504.13958},
  year={2025}
}

@article{feng2025retool,
  title={Retool: Reinforcement learning for strategic tool use in llms},
  author={Feng, Jiazhan and Huang, Shijue and Qu, Xingwei and Zhang, Ge and Qin, Yujia and Zhong, Baoquan and Jiang, Chengquan and Chi, Jinxin and Zhong, Wanjun},
  journal={arXiv preprint arXiv:2504.11536},
  year={2025}
}

@article{yan2025learning,
  title={Learning to reason under off-policy guidance},
  author={Yan, Jianhao and Li, Yafu and Hu, Zican and Wang, Zhi and Cui, Ganqu and Qu, Xiaoye and Cheng, Yu and Zhang, Yue},
  journal={arXiv preprint arXiv:2504.14945},
  year={2025}
}

@article{zheng2025group,
  title={Group Sequence Policy Optimization},
  author={Zheng, Chujie and Liu, Shixuan and Li, Mingze and Chen, Xiong-Hui and Yu, Bowen and Gao, Chang and Dang, Kai and Liu, Yuqiong and Men, Rui and Yang, An and Zhou, Jingren and Lin, Junyang},
  journal={arXiv preprint arXiv:2507.18071},
  year={2025}
}

@article{yu2025dapo,
  title={Dapo: An open-source llm reinforcement learning system at scale},
  author={Yu, Qiying and Zhang, Zheng and Zhu, Ruofei and Yuan, Yufeng and Zuo, Xiaochen and Yue, Yu and Dai, Weinan and Fan, Tiantian and Liu, Gaohong and Liu, Lingjun and others},
  journal={arXiv preprint arXiv:2503.14476},
  year={2025}
}

@article{ouyang2022training,
  title={Training language models to follow instructions with human feedback},
  author={Ouyang, Long and Wu, Jeffrey and Jiang, Xu and Almeida, Diogo and Wainwright, Carroll and Mishkin, Pamela and Zhang, Chong and Agarwal, Sandhini and Slama, Katarina and Ray, Alex and others},
  journal={Advances in neural information processing systems},
  volume={35},
  pages={27730--27744},
  year={2022}
}

@article{rafailov2023direct,
  title={Direct preference optimization: Your language model is secretly a reward model},
  author={Rafailov, Rafael and Sharma, Archit and Mitchell, Eric and Manning, Christopher D and Ermon, Stefano and Finn, Chelsea},
  journal={Advances in neural information processing systems},
  volume={36},
  pages={53728--53741},
  year={2023}
}

@article{ahmadian2024back,
  title={Back to basics: Revisiting reinforce style optimization for learning from human feedback in llms},
  author={Ahmadian, Arash and Cremer, Chris and Gall{\'e}, Matthias and Fadaee, Marzieh and Kreutzer, Julia and Pietquin, Olivier and {\"U}st{\"u}n, Ahmet and Hooker, Sara},
  journal={arXiv preprint arXiv:2402.14740},
  year={2024}
}

@inproceedings{yao2023react,
  title={React: Synergizing reasoning and acting in language models},
  author={Yao, Shunyu and Zhao, Jeffrey and Yu, Dian and Du, Nan and Shafran, Izhak and Narasimhan, Karthik and Cao, Yuan},
  booktitle={International Conference on Learning Representations (ICLR)},
  year={2023}
}

@article{shinn2023reflexion,
  title={Reflexion: Language agents with verbal reinforcement learning},
  author={Shinn, Noah and Cassano, Federico and Gopinath, Ashwin and Narasimhan, Karthik and Yao, Shunyu},
  journal={Advances in Neural Information Processing Systems},
  volume={36},
  pages={8634--8652},
  year={2023}
}

@article{schick2023toolformer,
  title={Toolformer: Language models can teach themselves to use tools},
  author={Schick, Timo and Dwivedi-Yu, Jane and Dess{\`\i}, Roberto and Raileanu, Roberta and Lomeli, Maria and Hambro, Eric and Zettlemoyer, Luke and Cancedda, Nicola and Scialom, Thomas},
  journal={Advances in Neural Information Processing Systems},
  volume={36},
  pages={68539--68551},
  year={2023}
}

@article{nakano2021webgpt,
  title={Webgpt: Browser-assisted question-answering with human feedback},
  author={Nakano, Reiichiro and Hilton, Jacob and Balaji, Suchir and Wu, Jeff and Ouyang, Long and Kim, Christina and Hesse, Christopher and Jain, Shantanu and Kosaraju, Vineet and Saunders, William and others},
  journal={arXiv preprint arXiv:2112.09332},
  year={2021}
}

@article{wei2025webagent,
  title={Webagent-r1: Training web agents via end-to-end multi-turn reinforcement learning},
  author={Wei, Zhepei and Yao, Wenlin and Liu, Yao and Zhang, Weizhi and Lu, Qin and Qiu, Liang and Yu, Changlong and Xu, Puyang and Zhang, Chao and Yin, Bing and others},
  journal={arXiv preprint arXiv:2505.16421},
  year={2025}
}

@article{qi2024webrl,
  title={Webrl: Training llm web agents via self-evolving online curriculum reinforcement learning},
  author={Qi, Zehan and Liu, Xiao and Iong, Iat Long and Lai, Hanyu and Sun, Xueqiao and Zhao, Wenyi and Yang, Yu and Yang, Xinyue and Sun, Jiadai and Yao, Shuntian and others},
  journal={arXiv preprint arXiv:2411.02337},
  year={2024}
}

@article{chen2025learning,
  title={Learning to reason with search for llms via reinforcement learning},
  author={Chen, Mingyang and Li, Tianpeng and Sun, Haoze and Zhou, Yijie and Zhu, Chenzheng and Wang, Haofen and Pan, Jeff Z and Zhang, Wen and Chen, Huajun and Yang, Fan and others},
  journal={arXiv preprint arXiv:2503.19470},
  year={2025}
}

@article{cheng2025stop,
  title={Stop summation: Min-form credit assignment is all process reward model needs for reasoning},
  author={Cheng, Jie and Qiao, Ruixi and Li, Lijun and Guo, Chao and Wang, Junle and Xiong, Gang and Lv, Yisheng and Wang, Fei-Yue},
  journal={arXiv preprint arXiv:2504.15275},
  year={2025}
}

@article{li2025torl,
  title={Torl: Scaling tool-integrated rl},
  author={Li, Xuefeng and Zou, Haoyang and Liu, Pengfei},
  journal={arXiv preprint arXiv:2503.23383},
  year={2025}
}

@article{jin2025empirical,
  title={An Empirical Study on Reinforcement Learning for Reasoning-Search Interleaved LLM Agents},
  author={Jin, Bowen and Yoon, Jinsung and Kargupta, Priyanka and Arik, Sercan O and Han, Jiawei},
  journal={arXiv preprint arXiv:2505.15117},
  year={2025}
}

@article{lu2025arpo,
  title={ARPO: End-to-End Policy Optimization for GUI Agents with Experience Replay},
  author={Lu, Fanbin and Zhong, Zhisheng and Liu, Shu and Fu, Chi-Wing and Jia, Jiaya},
  journal={arXiv preprint arXiv:2505.16282},
  year={2025}
}

@article{li2025repo,
  title={RePO: Replay-Enhanced Policy Optimization},
  author={Li, Siheng and Zhou, Zhanhui and Lam, Wai and Yang, Chao and Lu, Chaochao},
  journal={arXiv preprint arXiv:2506.09340},
  year={2025}
}

@article{liu2024deepseek,
  title={Deepseek-v3 technical report},
  author={Liu, Aixin and Feng, Bei and Xue, Bing and Wang, Bingxuan and Wu, Bochao and Lu, Chengda and Zhao, Chenggang and Deng, Chengqi and Zhang, Chenyu and Ruan, Chong and others},
  journal={arXiv preprint arXiv:2412.19437},
  year={2024}
}

@article{jimenez2023swe,
  title={Swe-bench: Can language models resolve real-world github issues?},
  author={Jimenez, Carlos E and Yang, John and Wettig, Alexander and Yao, Shunyu and Pei, Kexin and Press, Ofir and Narasimhan, Karthik},
  journal={arXiv preprint arXiv:2310.06770},
  year={2023}
}

@article{chen2023sufficiency,
  title={The sufficiency of off-policyness and soft clipping: {PPO} is still insufficient according to an off-policy measure},
  author={Chen, Xing and Diao, Dongcui and Chen, Hechang and Yao, Hengshuai and Piao, Haiyin and Sun, Zhixiao and Yang, Zhiwei and Goebel, Randy and Jiang, Bei and Chang, Yi},
  journal={Proceedings of the AAAI Conference on Artificial Intelligence},
  volume={37},
  number={6},
  pages={7078--7086},
  year={2023}
}

@article{nachum2017trust,
  title={Trust-pcl: An off-policy trust region method for continuous control},
  author={Nachum, Ofir and Norouzi, Mohammad and Xu, Kelvin and Schuurmans, Dale},
  journal={arXiv preprint arXiv:1707.01891},
  year={2017}
}

@article{munos2016safe,
  title={Safe and efficient off-policy reinforcement learning},
  author={Munos, R{\'e}mi and Stepleton, Tom and Harutyunyan, Anna and Bellemare, Marc},
  journal={Advances in neural information processing systems},
  volume={29},
  year={2016}
}

@article{precup2000eligibility,
  title={Eligibility traces for off-policy policy evaluation},
  author={Precup, Doina and Sutton, Richard S and Singh, Satinder},
  journal={International Conference on Machine Learning (ICML)},
  pages={759--766},
  year={2000}
}

@article{dorka2022adaptively,
  title={Adaptively calibrated critic estimates for deep reinforcement learning},
  author={Dorka, Nicolai and Welschehold, Tim and B{\"o}decker, Joschka and Burgard, Wolfram},
  journal={IEEE Robotics and Automation Letters},
  volume={8},
  number={2},
  pages={624--631},
  year={2022},
  publisher={IEEE}
}

@article{yuan2025s,
  title={What's Behind PPO's Collapse in Long-CoT? Value Optimization Holds the Secret},
  author={Yuan, Yufeng and Yue, Yu and Zhu, Ruofei and Fan, Tiantian and Yan, Lin},
  journal={arXiv preprint arXiv:2503.01491},
  year={2025}
}

@article{arjona2019rudder,
  title={Rudder: Return decomposition for delayed rewards},
  author={Arjona-Medina, Jose A and Gillhofer, Michael and Widrich, Michael and Unterthiner, Thomas and Brandstetter, Johannes and Hochreiter, Sepp},
  journal={Advances in Neural Information Processing Systems},
  volume={32},
  year={2019}
}

@article{zhou2024archer,
  title={Archer: Training language model agents via hierarchical multi-turn rl},
  author={Zhou, Yifei and Zanette, Andrea and Pan, Jiayi and Levine, Sergey and Kumar, Aviral},
  journal={arXiv preprint arXiv:2402.19446},
  year={2024}
}

@article{he2024give,
  title={Give: Structured reasoning of large language models with knowledge graph inspired veracity extrapolation},
  author={He, Jiashu and Ma, Mingyu Derek and Fan, Jinxuan and Roth, Dan and Wang, Wei and Ribeiro, Alejandro},
  journal={arXiv preprint arXiv:2410.08475},
  year={2024}
}

@article{he2025self,
  title={Self-give: Associative thinking from limited structured knowledge for enhanced large language model reasoning},
  author={He, Jiashu and Fan, Jinxuan and Jiang, Bowen and Houine, Ignacio and Roth, Dan and Ribeiro, Alejandro},
  journal={arXiv preprint arXiv:2505.15062},
  year={2025}
}

@article{jin2020disease,
  title={What disease does this patient have},
  author={Jin, Di and Pan, Eileen and Oufattole, Nassim and Weng, Wei-Hung and Fang, Hanyi and Szolovits, Peter},
  journal={A Large-scale Open Domain Question Answering Dataset from Medical Exams. arXiv [cs. CL]},
  year={2020}
}

@inproceedings{pal2022medmcqa,
  title={Medmcqa: A large-scale multi-subject multi-choice dataset for medical domain question answering},
  author={Pal, Ankit and Umapathi, Logesh Kumar and Sankarasubbu, Malaikannan},
  booktitle={Conference on health, inference, and learning},
  pages={248--260},
  year={2022},
  organization={PMLR}
}

@inproceedings{jin2019pubmedqa,
  title={Pubmedqa: A dataset for biomedical research question answering},
  author={Jin, Qiao and Dhingra, Bhuwan and Liu, Zhengping and Cohen, William and Lu, Xinghua},
  booktitle={Proceedings of the 2019 conference on empirical methods in natural language processing and the 9th international joint conference on natural language processing (EMNLP-IJCNLP)},
  pages={2567--2577},
  year={2019}
}

@article{wang2024mmlu,
  title={Mmlu-pro: A more robust and challenging multi-task language understanding benchmark},
  author={Wang, Yubo and Ma, Xueguang and Zhang, Ge and Ni, Yuansheng and Chandra, Abhranil and Guo, Shiguang and Ren, Weiming and Arulraj, Aaran and He, Xuan and Jiang, Ziyan and others},
  journal={Advances in Neural Information Processing Systems},
  volume={37},
  pages={95266--95290},
  year={2024}
}

@article{zuo2025medxpertqa,
  title={Medxpertqa: Benchmarking expert-level medical reasoning and understanding},
  author={Zuo, Yuxin and Qu, Shang and Li, Yifei and Chen, Zhangren and Zhu, Xuekai and Hua, Ermo and Zhang, Kaiyan and Ding, Ning and Zhou, Bowen},
  journal={arXiv preprint arXiv:2501.18362},
  year={2025}
}

@article{liu2025beyond,
  title={Beyond distillation: Pushing the limits of medical llm reasoning with minimalist rule-based rl},
  author={Liu, Che and Wang, Haozhe and Pan, Jiazhen and Wan, Zhongwei and Dai, Yong and Lin, Fangzhen and Bai, Wenjia and Rueckert, Daniel and Arcucci, Rossella},
  journal={arXiv preprint arXiv:2505.17952},
  year={2025}
}

@inproceedings{glass2022re2g,
  title={Re2G: Retrieve, rerank, generate},
  author={Glass, Michael and Rossiello, Gaetano and Chowdhury, Md Faisal Mahbub and Naik, Ankita and Cai, Pengshan and Gliozzo, Alfio},
  booktitle={Proceedings of the 2022 Conference of the North American Chapter of the Association for Computational Linguistics: Human Language Technologies},
  pages={2701--2715},
  year={2022}
}

@inproceedings{li2025retrollm,
  title={Retrollm: Empowering large language models to retrieve fine-grained evidence within generation},
  author={Li, Xiaoxi and Jin, Jiajie and Zhou, Yujia and Wu, Yongkang and Li, Zhonghua and Qi, Ye and Dou, Zhicheng},
  booktitle={Proceedings of the 63rd Annual Meeting of the Association for Computational Linguistics (Volume 1: Long Papers)},
  pages={16754--16779},
  year={2025}
}

@article{wei2022chain,
  title={Chain-of-thought prompting elicits reasoning in large language models},
  author={Wei, Jason and Wang, Xuezhi and Schuurmans, Dale and Bosma, Maarten and Xia, Fei and Chi, Ed and Le, Quoc V and Zhou, Denny and others},
  journal={Advances in neural information processing systems},
  volume={35},
  pages={24824--24837},
  year={2022}
}

@inproceedings{rein2024gpqa,
  title={Gpqa: A graduate-level google-proof q\&a benchmark},
  author={Rein, David and Hou, Betty Li and Stickland, Asa Cooper and Petty, Jackson and Pang, Richard Yuanzhe and Dirani, Julien and Michael, Julian and Bowman, Samuel R},
  booktitle={First Conference on Language Modeling},
  year={2024}
}
\bibliographystyle{iclr2026_conference}

\appendix
\section{Appendix} 
\subsection{Representative Failure Cases}\label{app:fail_example_text}
To provide concrete evidence for the instability discussed above, we present representative failure cases from failed training runs. In the visualization, green tokens indicate samples that are clipped with importance sampling ratios smaller than $1-\epsilon$, while red tokens correspond to clipped samples with ratios larger than $1+\epsilon$. We observe that the majority of clipped tokens are associated with format-related symbols, such as angle brackets and other structural markers, rather than semantic content tokens. This observation suggests that token-level credit assignment is particularly prone to overestimating or underestimating frequently occurring format tokens, whose repetition amplifies the variance of importance sampling. As a consequence, the policy is driven to over-optimize these non-semantic tokens. In the 247th epoch, the resulting logical inconsistency in the generated output (e.g., ``worsened by eating and improved by eating'') arises from unstable gradient updates under the token-level PPO method, where high-variance off-policy importance sampling produces unreliable gradient updates.
this effect becomes evident as the generated output gradually violates the predefined formatting rules, leading to failed tool invocations. Once the format constraints are broken, downstream tool calls can no longer be executed correctly, ultimately triggering training collapse.

\begin{tcolorbox}[
  title=Generation on 243rd epoch,
  colback=blue!5,
  colframe=blue!70!black,
  boxrule=1pt,
  arc=4pt,
  left=2mm,
  right=2mm,
  top=1mm,
  bottom=1mm,
  breakable
]

\ttfamily\small

<think>

The patient is experiencing lower abdominal pain, vaginal discharge, painful intercourse, and painful urination, which are common symptoms of a genital infection.

I'm going to <search> pelvic inflammatory disease </search\redchar{>}\greenchar{Based} on the symptoms and the \redchar{result}

I'm going to <search> \greenchar{PID} diagnosis </search>\greenchar{Multiple} pathogens can cause \greenchar{pelvic}<think>

\redchar{I}'m going to <search\redchar{>} chlamydial infections </search>, there was some thought that """Chlamydia"<think>

\greenchar{Based} on \greenchar{clinical} \greenchar{findings} \greenchar{and} \greenchar{symptoms}<\redchar{answer}>E</answer\redchar{>}

\end{tcolorbox}

\begin{tcolorbox}[
  title=Generation on 247th epoch,
  colback=blue!5,
  colframe=blue!70!black,
  boxrule=1pt,
  arc=4pt,
  left=2mm,
  right=2mm,
  top=1mm,
  bottom=1mm,
  breakable
]

\ttfamily\small

\texttt{<think>}

The patient is experiencing symptoms of gastroparesis, including bloating, nausea, and dark stools, which are worsened by eating and improved by eating.

I'll \texttt{<search>} gastroparesis \texttt{</search>}
\greenchar{assistant} \texttt{<think>}

\redchar{\$}\texttt{search>} gastroesophageal reflux disease
\greenchar{complications}\texttt{</search>}\texttt{<think>}

\redchar{\texttt{<b}}<think{\greenchar{>}}

assistant \greenchar{')}, 

\greenchar{\texttt{<Bassistant}}\greenchar{\texttt{<think>}}

\greenchar{assistant}

\end{tcolorbox}

\subsection{Proof of Lemma~\ref{lm:proposed_ppo_grad}}
\label{app:proposed_ppo_grad}
The proof establishes turn-level credit assignment by first reformulating the PPO objective with indicator functions, then computing how the gradient with respect to a single token depends only on its containing turn due to locality. The key insight is that the turn-level importance weight derivative yields a scaling factor of $\frac{w_k^{\text{turn}}(\theta)}{|y^k|}$, where the $\frac{1}{|y^k|}$ term arises from the geometric mean structure of the turn-level importance ratio. Finally, aggregating over all tokens via the multivariable chain rule produces the claimed gradient form where each turn receives credit proportional to $w_k^{\text{turn}}(\theta)\frac{\hat{A}^k}{|y^k|}$, combining importance weighting, accumulated advantage, and length normalization.

We now proceed with the lemma proof. We derive the gradient of the Turn-PPO objective $\mathcal{J}_{\mathrm{Turn-PPO}}(\theta)$ defined in Eq.~\ref{eq:turn_level_PPO}. Beginning with the objective in expanded form:
\begin{align}
    &\mathcal{J}_{\mathrm{Turn-PPO}}(\theta) 
    \nonumber\\
    &= \mathbb{E}_{x \sim \mathcal{D}, \ y \sim \pi_{\theta_{\mathrm{old}}}(\cdot|x)} \!\left[\!\frac{1}{|y|} \sum_{k=1}^{K} \sum_{t=t_k^{\text{start}}}^{t_k^{\text{end}}} \min\left\{ w_k^{\text{turn}}(\theta) \hat{A}_{t}, \text{clip}(w_k^{\text{turn}}(\theta), 1-\epsilon, 1+\epsilon) \hat{A}_{t} \right\} \!\right]
    \nonumber\\
    &=\mathbb{E}_{x \sim \mathcal{D}, \ y \sim \pi_{\theta_{\mathrm{old}}}(\cdot|x)} \!\left[\!\frac{1}{|y|} \sum_{k=1}^{K} \sum_{t=t_k^{\text{start}}}^{t_k^{\text{end}}} \!\!
    \left(
    \mathds{1}_{(k,t) \in \mathcal{B}_{\text{turn}}}  w_k^{\text{turn}}(\theta) \hat{A}_{t}
    + \mathds{1}_{(k,t)\in B_{\text{turn}}^c}  \text{clip}(w^{\text{turn}}_k(\theta), 1\!-\!\epsilon, 1\!+\!\epsilon) \hat{A}_t
    \right) \!\right]\!,
    \label{eqn:ppo_objective_no_clip}
\end{align}
where\footnote{For notational brevity, we drop the expectation subscript in what follows, with all expectations taken over the same joint distribution where $x \sim \mathcal{D}$ and $y \sim \pi_{\theta_{\mathrm{old}}}(\cdot|x)$.} Eq.~\ref{eqn:ppo_objective_no_clip} follows from decomposing the min function using indicator functions. 
The indicator function $\mathds{1}_{(k,t) \in \mathcal{B}_{\text{turn}}} = 1$, if the token index $t$ within turn $k$ belongs to the set $\mathcal{B}_{\text{turn}}$ and $0$ otherwise.
The set $\mathcal{B}_{\text{turn}} = \mathcal{B}_{\text{turn}}^+ \cup \mathcal{B}_{\text{turn}}^-$ represents the indices of tokens within turns where clipping is inactive. 
Together, the set $\mathcal{B}_{\text{turn}}$ and its complement $\mathcal{B}^c$ partition the space of token indices across all turns, satisfying $\mathcal{B}_{\text{turn}} \cup \mathcal{B}_{\text{turn}}^c = \{1, 2, \ldots, |y|\}$ and $\mathcal{B}_{\text{turn}} \cap \mathcal{B}_{\text{turn}}^c = \emptyset$. 
For token indices $(k,t) \in \mathcal{B}_{\text{turn}}$, the min operation selects the unclipped term $w_k^{\text{turn}}(\theta) \hat{A}_{t}$; otherwise, for $(k,t) \in \mathcal{B}_{\text{turn}}^c$, it selects the clipped term $\text{clip}(w_k^{\text{turn}}(\theta), 1-\epsilon, 1+\epsilon) \hat{A}_{t}$.

Taking the gradient with respect to $\log \pi_\theta(y_{t'} | x, y_{<t'})$ where 
$t' \in [t_{k'}^{\text{start}}, t_{k'}^{\text{end}}]$ (i.e., token $t'$ belongs to turn $k'$), we get
\begin{align}
    \frac{\partial \mathcal{J}_{\mathrm{Turn-PPO}}(\theta)}{\partial \log \pi_\theta(y_{t'} | x, y_{<t'})}
    &= 
    \mathbb{E} \left[\frac{1}{|y|} \sum_{k=1}^{K} \sum_{t=t_k^{\text{start}}}^{t_k^{\text{end}}} \mathds{1}_{(k,t) \in \mathcal{B}_{\text{turn}}} \ \frac{\partial w_k^{\text{turn}}(\theta)}{\partial \log \pi_\theta(y_{t'} | x, y_{<t'})} \hat{A}_{t} \right] 
    \label{eqn:grad_turn_ppo_1}\\
    &=
    \mathbb{E} \left[ \frac{1}{|y|} \sum_{t=t_{k'}^{\text{start}}}^{t_{k'}^{\text{end}}} \mathds{1}_{(k,t) \in \mathcal{B}_{\text{turn}}} \ \frac{\partial w_{k'}^{\text{turn}}(\theta)}{\partial \log \pi_\theta(y_{t'} | x, y_{<t'})} \hat{A}_{t} \right],
    \label{eqn:grad_turn_ppo_2}
\end{align}
where Eq.~\ref{eqn:grad_turn_ppo_1} follows since clipped terms have zero gradients by definition of the clipping function; and Eq.~\ref{eqn:grad_turn_ppo_2} follows since $w_k^{\text{turn}}(\theta)$ depends only on tokens within turn $k$. Since token $t'$ belongs to turn $k'$, we have $\frac{\partial w_k^{\text{turn}}(\theta)}{\partial \log \pi_\theta(y_{t'} | x, y_{<t'})} = 0$ for all $k \neq k'$  due to disjoint turn boundaries.

Next, we evaluate the derivative of $w_{k'}^{\text{turn}}(\theta)$ with respect to the log probability. Recall that $w_{k'}^{\text{turn}}(\theta)$ is defined as
\begin{align}
    w_{k'}^{\text{turn}}(\theta) 
    &=
    \left( \frac{\pi_\theta(y^{k'} | x, y^{<k'})}{\pi_{\theta_{\mathrm{old}}}(y^{k'} | x, y^{<k'})} \right)^{\frac{1}{|y^{k'}|}}
    \nonumber\\
    &=
    \left( \prod_{t=t_{k'}^{\text{start}}}^{t_{k'}^{\text{end}}} \frac{\pi_\theta(y_t | x, y_{<t})}{\pi_{\theta_{\mathrm{old}}}(y_t | x, y_{<t})} \right)^{\frac{1}{|y^{k'}|}}
    \nonumber\\
    &= 
    \exp\left( \frac{1}{|y^{k'}|} \sum_{t=t_{k'}^{\text{start}}}^{t_{k'}^{\text{end}}} \left[ \log \pi_\theta(y_t | x, y_{<t}) - \log \pi_{\theta_{\mathrm{old}}}(y_t | x, y_{<t}) \right] \right)
    \nonumber\\
    &=
    \exp\left( -\frac{1}{|y^{k'}|} \sum_{t=t_{k'}^{\text{start}}}^{t_{k'}^{\text{end}}} \log \pi_{\theta_{\mathrm{old}}}(y_t | x, y_{<t}) \right) \cdot \exp\left( \frac{1}{|y^{k'}|} \sum_{t=t_{k'}^{\text{start}}}^{t_{k'}^{\text{end}}} \log \pi_\theta(y_t | x, y_{<t}) \right),
    \label{eqn:weight_turn_ppo}
\end{align}
where the first exponential term in Eq.~\ref{eqn:weight_turn_ppo} is constant with respect to $\theta$.
We can write $\linebreak w_{k'}^{\text{turn}}(\theta) = \kappa \cdot \exp{\left(f(\theta)\right)}$ where
\begin{align}
    \kappa &= \exp\left( -\frac{1}{|y^{k'}|} \sum_{t=t_{k'}^{\text{start}}}^{t_{k'}^{\text{end}}} \log \pi_{\theta_{\mathrm{old}}}(y_t | x, y_{<t}) \right),
    \nonumber\\
    f(\theta) &= \frac{1}{|y^{k'}|} \sum_{t=t_{k'}^{\text{start}}}^{t_{k'}^{\text{end}}} \log \pi_\theta(y_t | x, y_{<t}).
\end{align}
Consequently, the derivative of $w_{k'}^{\text{turn}}(\theta)$ with respect to the log probability is evaluated as
\begin{align}
    \frac{\partial w_{k'}^{\text{turn}}(\theta)}{\partial \log \pi_\theta(y_{t'} | x, y_{<t'})} 
    &= 
    \kappa \cdot \exp{\left(f(\theta)\right)} \cdot \frac{\partial f(\theta)}{\partial \log \pi_\theta(y_{t'} | x, y_{<t'})}
    \label{grad_weight_turn_ppo_1}\\
    &=
    w_{k'}^{\text{turn}}(\theta) \cdot \frac{\partial}{\partial \log \pi_\theta(y_{t'} | x, y_{<t'})} \left[ \frac{1}{|y^{k'}|} \sum_{t=t_{k'}^{\text{start}}}^{t_{k'}^{\text{end}}} \log \pi_\theta(y_t | x, y_{<t}) \right]
    \nonumber\\
    &=
    w_{k'}^{\text{turn}}(\theta) \cdot \frac{1}{|y^{k'}|} \cdot \frac{\partial \log \pi_\theta(y_{t'} | x, y_{<t'})}{\partial \log \pi_\theta(y_{t'} | x, y_{<t'})}
    \label{grad_weight_turn_ppo_2}\\
    &=
    w_{k'}^{\text{turn}}(\theta) \cdot \frac{1}{|y^{k'}|},
    \label{grad_weight_turn_ppo_3}
\end{align}
where Eq.~\ref{grad_weight_turn_ppo_1} follows from the chain rule; and Eq.~\ref{grad_weight_turn_ppo_2} follows since only the term with index $t = t'$ in the summation depends on $\log \pi_\theta(y_{t'} | x, y_{<t'})$.
Substituting Eq.~\ref{grad_weight_turn_ppo_3} back into Eq.~\ref{eqn:grad_turn_ppo_2}, we get
\begin{align}
    \frac{\partial \mathcal{J}_{\mathrm{Turn-PPO}}(\theta)}{\partial \log \pi_\theta(y_{t'} | x, y_{<t'})} 
    = 
    \mathbb{E} \left[ \frac{w_{k'}^{\text{turn}}(\theta)}{|y| \cdot |y^{k'}|} \sum_{t=t_{k'}^{\text{start}}}^{t_{k'}^{\text{end}}}\mathds{1}_{(k,t) \in \mathcal{B}_{\text{turn}}} \hat{A}_{t} \right].
    \label{eqn:partial_derivative_exp}
\end{align}

Finally, from Eq.~\ref{eqn:ppo_objective_no_clip}, the full gradient is given by
\begin{align}
    \nabla_{\theta} \mathcal{J}_{\mathrm{Turn-PPO}}(\theta)
    &= \sum_{k=1}^{K} \sum_{t=t_k^{\text{start}}}^{t_k^{\text{end}}} \frac{\partial \mathcal{J}_{\mathrm{Turn-PPO}}(\theta)}{\partial \log \pi_\theta(y_t | x, y_{<t})} \nabla_{\theta} \log \pi_\theta(y_t | x, y_{<t})
    \label{eqn:full_gradient_1}\\
    &= \sum_{k=1}^{K} \sum_{t=t_k^{\text{start}}}^{t_k^{\text{end}}} \mathbb{E} \left[ \frac{w_k^{\text{turn}}(\theta)}{|y| \cdot |y^k|} \sum_{j=t_k^{\text{start}}}^{t_k^{\text{end}}}\mathds{1}_{(k,j) \in \mathcal{B}_{\text{turn}}} \hat{A}_{j} \right] \nabla_{\theta} \log \pi_\theta(y_t | x, y_{<t})
    \label{eqn:full_gradient_2}\\
    &= \mathbb{E} \left[\sum_{k=1}^{K} \frac{w_k^{\text{turn}}(\theta)}{|y| \cdot |y^k|} \left(\sum_{j=t_k^{\text{start}}}^{t_k^{\text{end}}}\mathds{1}_{(k,j) \in \mathcal{B}_{\text{turn}}} \hat{A}_{j}\right) \sum_{t=t_k^{\text{start}}}^{t_k^{\text{end}}} \nabla_{\theta} \log \pi_\theta(y_t | x, y_{<t}) \right]
    \nonumber\\
    &= \mathbb{E} \left[\sum_{k=1}^{K} \frac{w_k^{\text{turn}}(\theta)}{|y| \cdot |y^k|} \left(\sum_{j=t_k^{\text{start}}}^{t_k^{\text{end}}}\mathds{1}_{(k,j) \in \mathcal{B}_{\text{turn}}} \hat{A}_{j}\right) \nabla_{\theta} \log \pi_\theta(y^k | x, y^{<k}) \right]
    \label{eqn:full_gradient_4}\\
    &= \mathbb{E} \left[\frac{1}{|y|} \sum_{k=1}^{K} 
    w_k^{\text{turn}}(\theta)\frac{\hat{A}^k}{|y^k|}
    \nabla_{\theta} \log \pi_\theta(y^k | x, y^{<k}) \right],
    \label{eqn:full_gradient_5}
\end{align}
where Eq.~\ref{eqn:full_gradient_1} applies the multivariable chain rule; Eq.~\ref{eqn:full_gradient_2} follows from Eq.~\ref{eqn:partial_derivative_exp}; Eq.~\ref{eqn:full_gradient_4} uses the chain rule identity $\nabla_{\theta} \log \pi_\theta(y^k | x, y^{<k}) = \sum_{t=t_k^{\text{start}}}^{t_k^{\text{end}}} \nabla_{\theta} \log \pi_\theta(y_t | x, y_{<t})$; and Eq.~\ref{eqn:full_gradient_5} follows by defining $\linebreak \hat{A}^k \coloneqq \sum_{j=t_k^{\text{start}}}^{t_k^{\text{end}}} \mathds{1}_{(k,j) \in \mathcal{B}_{\text{turn}}} \hat{A}_{j}$. 
This completes the proof of Lemma~\ref{lm:proposed_ppo_grad}.
\hfill $\blacksquare$

\subsection{Proof of Lemma~\ref{lm:ppo_grad_decomp}}
\label{app:ppo_grad_decomp}
The token-level PPO objective function can be expressed as
\begin{align}
\mathcal{J}_{\mathrm{PPO}}(\theta) 
&= 
\mathbb{E}_{x \sim D, \ y \sim \pi_{\theta_{\mathrm{old}}}(\cdot|x)} \!\left[ \frac{1}{|y|} \sum_{t=1}^{|y|} \min(w_t(\theta) \hat{A}_t, \text{clip}(w_t(\theta), 1-\epsilon, 1+\epsilon) \hat{A}_t) \right]
\nonumber\\
&=
\mathbb{E}_{x \sim D, \ y \sim \pi_{\theta_{\mathrm{old}}}(\cdot|x)} \!\left[ \frac{1}{|y|} \sum_{t=1}^{|y|} \left( \mathds{1}_{t \in \mathcal{B}} \cdot w_t(\theta) \hat{A}_t + \mathds{1}_{t \in \mathcal{B}^c} \cdot \text{clip}(w_t(\theta), 1-\epsilon, 1+\epsilon) \hat{A}_t \right) \right]\!,
\label{eqn:ppo_fn_indicator}
\end{align}
where Eq.~\ref{eqn:ppo_fn_indicator} follows from the definition of the event $\mathcal{B}$ and
\begin{align}
    \text{clip}(w, 1-\epsilon, 1+\epsilon) 
    = \max(\min(w, 1+\epsilon), 1-\epsilon) 
    = 
    \begin{cases}
        1-\epsilon & \text{if } w < 1-\epsilon, \\
        w & \text{if } 1-\epsilon \leq w \leq 1+\epsilon, \\
        1+\epsilon & \text{if } w > 1+\epsilon.
    \end{cases}
\end{align}
Taking the gradient with respect to $\theta$, we obtain
\begin{align}
\nabla_\theta \mathcal{J}_{\mathrm{PPO}}(\theta) 
&= 
\mathbb{E} \left[ \frac{1}{|y|} \sum_{t=1}^{|y|} \left( \mathds{1}_{t \in \mathcal{B}} \cdot \nabla_\theta w_t(\theta) \hat{A}_t + \mathds{1}_{t \in \mathcal{B}^c} \cdot \nabla_\theta \:\text{clip}(w_t(\theta), 1-\epsilon, 1+\epsilon) \hat{A}_t \right) \right]
\nonumber\\
&=
\mathbb{E} \left[ \frac{1}{|y|} \sum_{t=1}^{|y|} \mathds{1}_{t \in \mathcal{B}} \cdot \nabla_\theta w_t(\theta) \hat{A}_t \right]
\label{eqn:grad_ppo_fn_indicator_1}\\
&=
\mathbb{E} \left[ \frac{1}{|y|} \sum_{t=1}^{|y|} \mathds{1}_{t \in \mathcal{B}} w_t(\theta) \nabla_\theta \log \pi_\theta(y_t | x, y_{<t}) \hat{A}_t \right],
\label{eqn:grad_ppo_fn_indicator_2}
\end{align}
where Eq.~\ref{eqn:grad_ppo_fn_indicator_1} follows from the definition of the event $B^c$ that yields
\begin{align}
    \mathds{1}_{t \in \mathcal{B}^c} \cdot \nabla_\theta \text{clip}(w, 1-\epsilon, 1+\epsilon) \hat{A}_t
    = 
    \begin{cases}
        \nabla_\theta (1-\epsilon)\hat{A}_t = 0 & \text{if } \hat{A}_t < 0 \text{ and } w_t < 1 - \epsilon, \\
        \nabla_\theta (1+\epsilon)\hat{A}_t = 0 & \text{if } \hat{A}_t \geq 0 \text{ and } w_t > 1 + \epsilon;
    \end{cases}
\end{align}
and Eq.~\ref{eqn:grad_ppo_fn_indicator_2} readily follows from the definition of $w_t(\theta)$ that leads to
\begin{align}
    \nabla_\theta w_t(\theta) 
    = \nabla_\theta \frac{\pi_\theta(y_t | x, y_{<t})}{\pi_{old}(y_t | x, y_{<t})} 
    = w_t(\theta) \nabla_\theta \log \pi_\theta(y_t | x, y_{<t}).
\end{align}

Finally, we verify that RHS of Lemma~\ref{lm:ppo_grad_decomp} is equal to Eq.~\ref{eqn:grad_ppo_fn_indicator_2} as follows:
\begin{align}
    &\mathbb{E}\left[\frac{1}{|y|} \!\sum_{t=1}^{|y|} w_t \nabla_\theta \log \pi_\theta \hat{A}_t\right]
    - 
    \mathbb{E}\left[\frac{1}{|y|} \sum_{t=1}^{|y|}\mathds{1}_{t \in \mathcal{B}^c} w_t \nabla_\theta \log \pi_\theta \hat{A}_t\right]
    \nonumber\\
    &=
    \mathbb{E}\left[\frac{1}{|y|} \sum_{t=1}^{|y|}\mathds{1}_{t \in \mathcal{B}} w_t \nabla_\theta \log \pi_\theta \hat{A}_t\right],
    \label{eqn:rhs_decmop_verify}
\end{align}
where Eq.~\ref{eqn:rhs_decmop_verify} follows from $(1 - \mathds{1}_{t \in \mathcal{B}^c}) = \mathds{1}_{t \in \mathcal{B}}$. This completes the proof of Lemma~\ref{lm:ppo_grad_decomp}.
\hfill $\blacksquare$

\subsection{Implementation Details}
\label{app:implment_details}
\textbf{Retrieval System.} 
A local retriever service is deployed and accessed via an HTTP interface. For each user query, we consistently return the top three passages. The dialogue is restricted to a maximum of three interaction turns.  

\textbf{Training Process.} 
All experiments are run on 8 NVIDIA H100 GPUs. 
We activate gradient checkpointing and train under a Fully Sharded Data Parallel (FSDP) setup with offloading enabled for parameters, gradients, and optimizers. 
The policy network is optimized with a learning rate of $1\times 10^{-6}$, while the critic is trained with $1\times 10^{-5}$. 
Optimization proceeds for 4 epochs, with warm-up ratios of 0.285 (policy) and 0.015 (critic). 
The effective batch size is 512, subdivided into mini-batches of 256 and micro-batches of 64 for policy updates and 8 for critic updates. 
Generalized Advantage Estimation (GAE) is employed with $\lambda = 1$ and $\gamma = 1$. 
Input sequences are truncated to at most 4{,}096 tokens, with limits of 500 tokens for responses, 2{,}048 tokens for the initial context, and 500 tokens for retrieved passages. 
We adopt turn-level importance sampling combined with clipping-trigger normalization to improve training stability. 
Regularization follows PPO conventions with a KL coefficient of 0.001 and a clipping threshold of 0.2. 
Rollouts are generated using vLLM with tensor parallel size 4, GPU memory utilization set to 0.6, temperature fixed at 1.0, and top-$p$ sampling of 1.0. 
The rollout and reference log-probability calculations both use a micro-batch size of~128.  

\textbf{Turn Boundary Identification.}
To implement turn-level importance sampling, we need to identify turn boundaries within the multi-turn trajectories. In our search task setup, we use the loss mask to distinguish between agent-generated content and environment responses: LLM-generated reasoning and query formulation steps are marked with 1 in the \texttt{loss\_mask}, while retrieved document content is marked with~0. We define turn boundaries by grouping consecutive tokens with \texttt{loss\_mask}~=~1 as complete turns (corresponding to the agent's actions in our turn-level MDP), while consecutive 0s represent states (i.e., retrieved content that provides environmental feedback). These identified turn boundaries enable our algorithm to apply turn-level importance sampling and credit assignment.

\subsection{Supplementary Experiments}\label{app:sec_ablation}

In the following experiments, we set the training batch size to 512 and the mini-batch size to 128, which results in one on-policy update and three subsequent off-policy updates per batch. Under this more off-policy setting, our method demonstrates greater stability and more controlled performance compared to the baseline.

Figure~\ref{fig:app_mini128} shows that, under this more off-policy regime, our method consistently exhibits greater training stability and stronger performance compared to vanilla PPO. In particular, as the degree of off-policyness increases due to repeated mini-batch reuse, the baseline method becomes increasingly unstable, whereas our approach remains robust and avoids performance collapse. These results demonstrate that the proposed method is better suited to highly off-policy training settings.

\begin{figure}[t]
    \centering
    \includegraphics[width=0.45\linewidth]{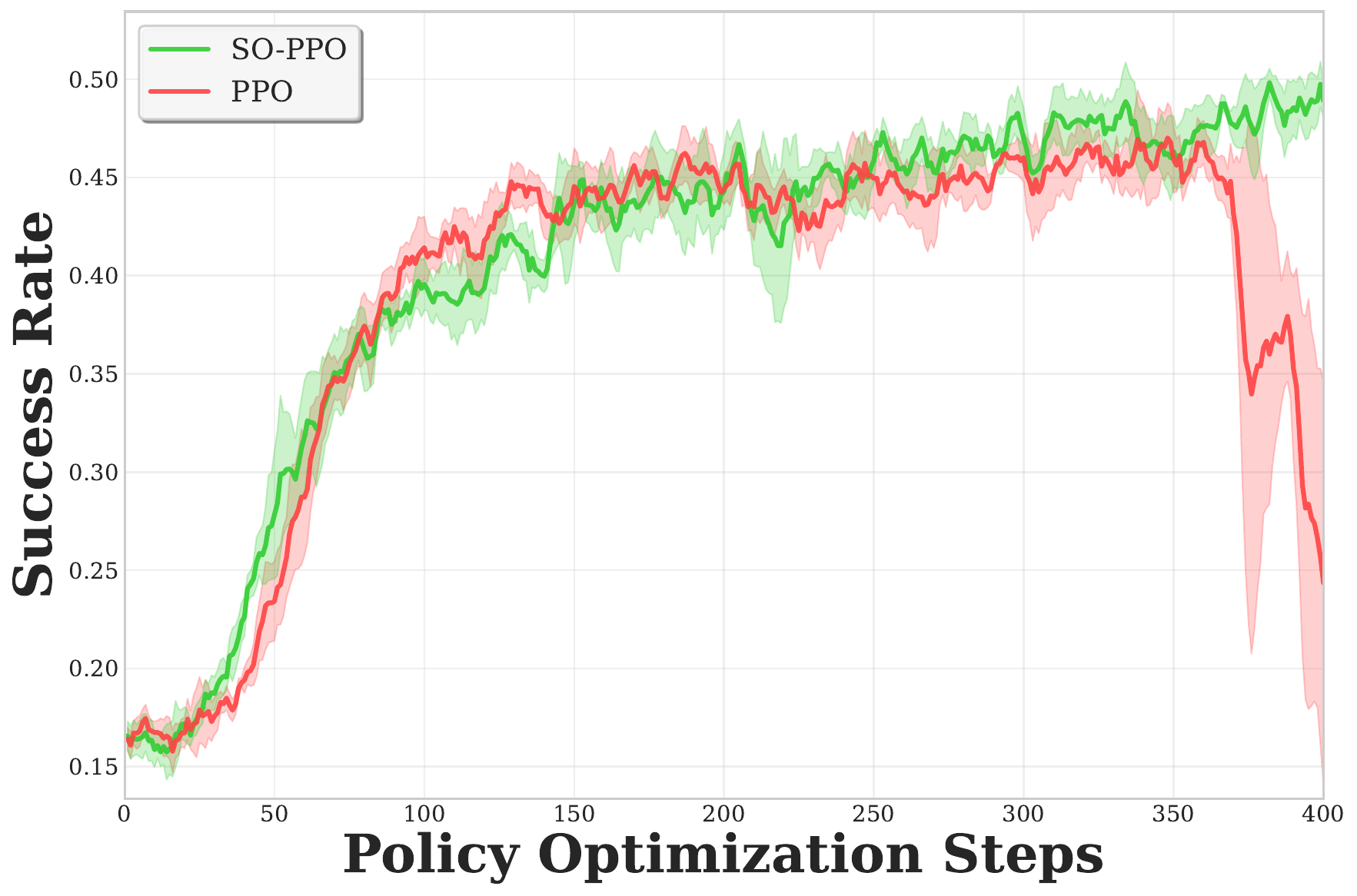}
    \caption{Experimental results of Qwen-2.5-7B policy models, with the value model also trained from Qwen-2.5-7B. Results are averaged over three trials.}
    \label{fig:app_mini128}
\end{figure}

\begin{figure}[t]
    \centering
    \includegraphics[width=0.45\linewidth]{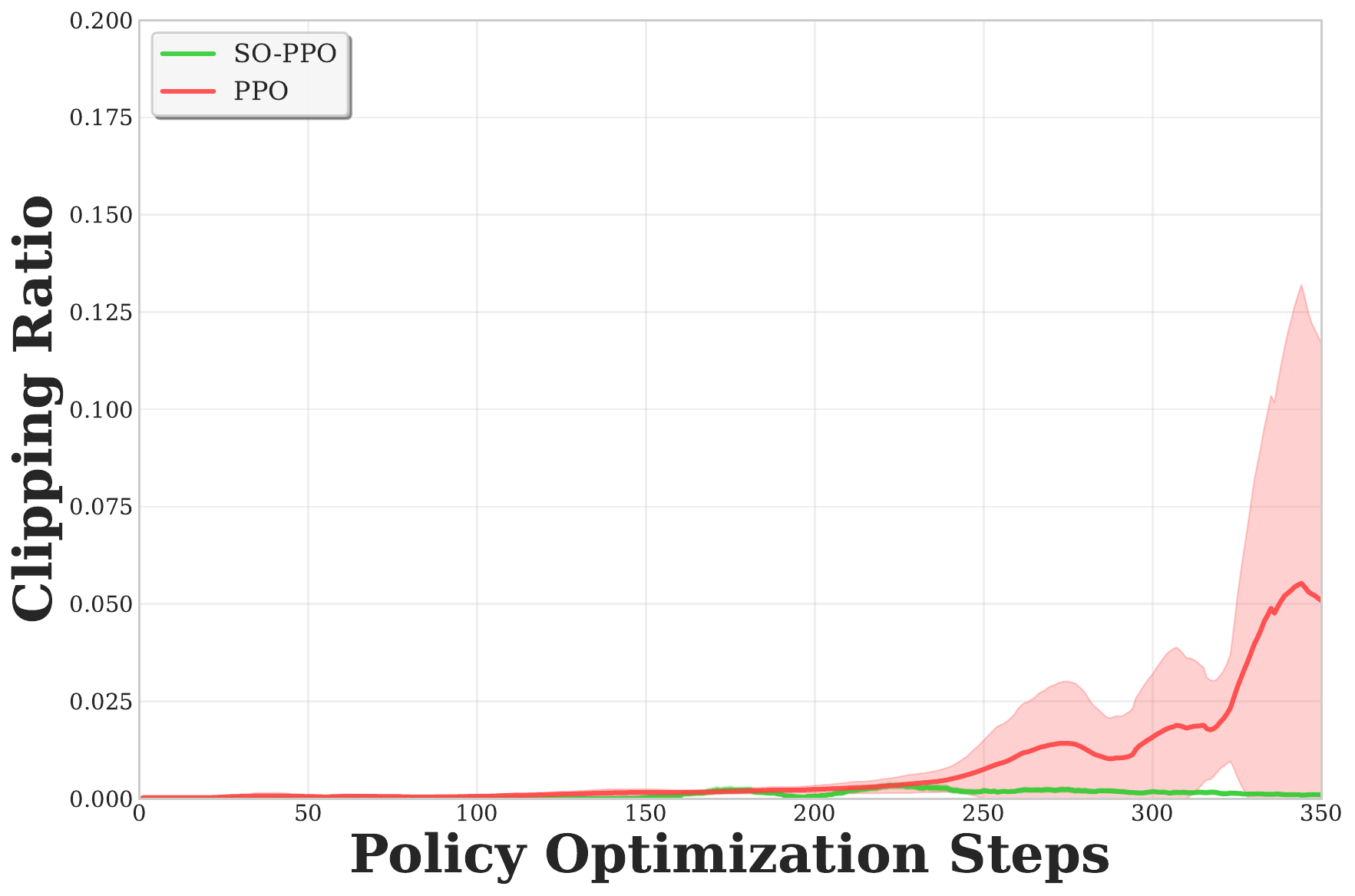}
    \includegraphics[width=0.45\linewidth]{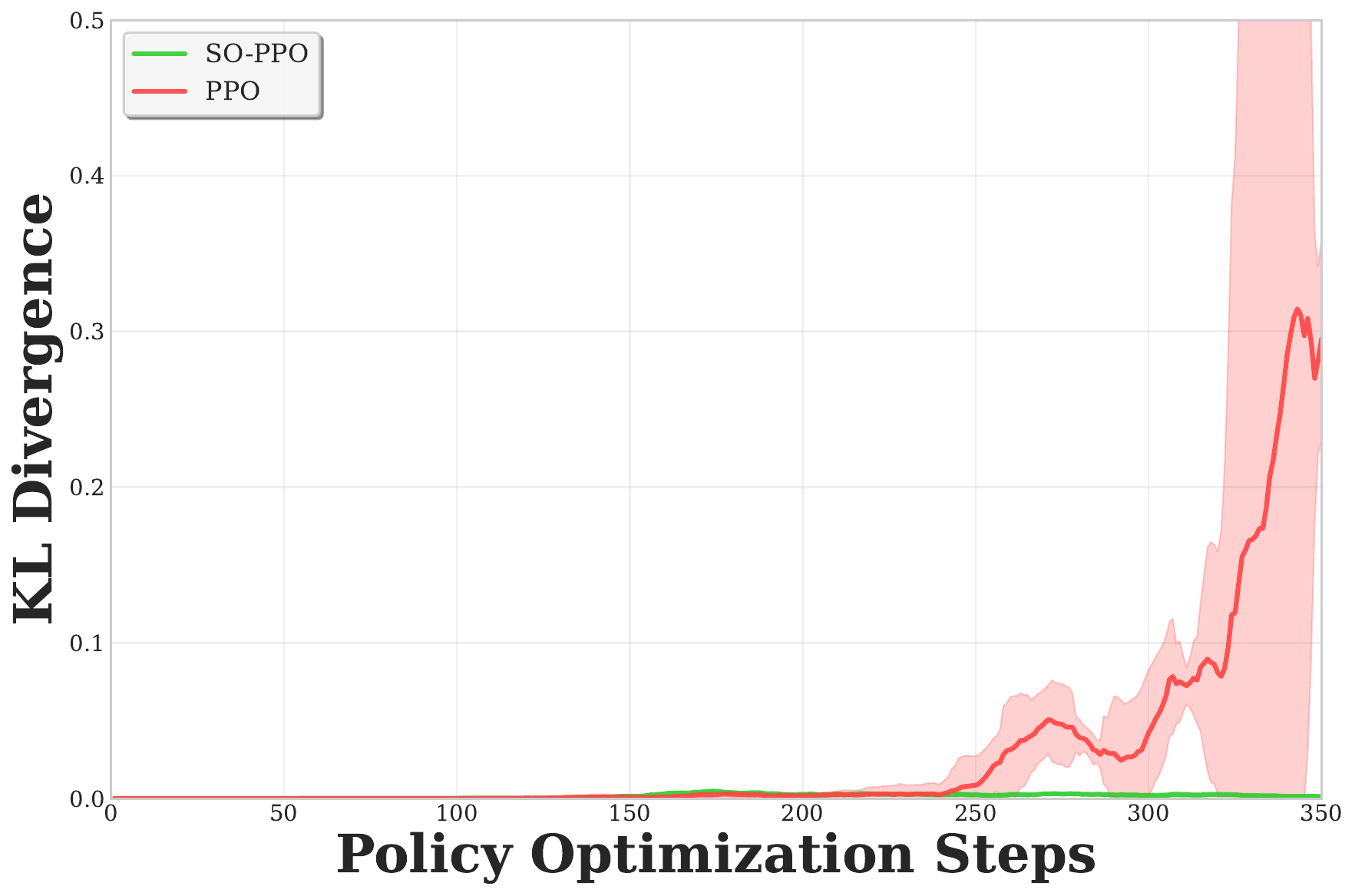}
    \caption{
    We report training dynamics during policy optimization on the HotpotQA dataset.
    \textbf{Left:} Clipping ratio.
    \textbf{Right:} KL divergence.
    \pponame~ achieve lower clipping ratios and KL divergence compared to vanilla PPO, indicating more stable training dynamics.
    }
    \label{fig:KL_result}
\end{figure}

\newpage

\begin{tcolorbox}[title=Prefix Prompt for the Medical Task,
  colback=blue!5, colframe=blue!70!black, boxrule=1pt, arc=4pt,
  left=2mm, right=2mm, top=1mm, bottom=1mm, breakable]\label{tb:medical_prompt}

Answer the given medical multiple-choice question. Think step-by-step inside <think> and </think> tags. 

When you encounter:
\begin{itemize}
    \item Unfamiliar medical terminology or drug names
    \item Complex disease mechanisms you're uncertain about
    \item Specific treatment protocols or guidelines you need to verify
    \item Any information where you lack confidence
\end{itemize}

You can search for clarification using <search> query </search>, and results will appear between <information> and </information>. 
Use search strategically to fill knowledge gaps and improve answer accuracy. 
After sufficient reasoning and any necessary searches, provide your final answer inside <answer> and </answer> with ONLY the letter (e.g., <answer>A</answer>).

\end{tcolorbox}

\vspace{10mm}

\begin{table}[!htbp]
\caption{A rollout example of answering a medical question.}
\label{tab:melanoma-rollout}
\centering
\begin{tabular}{p{0.95\textwidth}}
\toprule
\textbf{Question:} Which one of the following is not included in the treatment of malignant melanoma? \\[2mm]
\textbf{Options:} \\
A: Radiation \\
B: Surgical excision \\
C: Chemotherapy \\
D: Immunotherapy \\
\midrule

{\color{blue}<think>} Let's break down the question step by step! We're looking for the option that is not included in the treatment of malignant melanoma. {\color{blue}</think>} \\

{\color{blue}<think>} 
A: Radiation - Hmm, radiation therapy is sometimes used in combination with surgery or as palliative care for symptomatic relief. Not sure if it's the odd one out.  

B: Surgical excision - Ah, surgery is a standard treatment for removing the primary tumor and any involved lymph nodes. Yes, that's a correct treatment option.  

C: Chemotherapy - Chemotherapy can be used in some cases of malignant melanoma, particularly for patients with advanced disease or those who have recurrence after initial treatment. Yes, that's a valid treatment option.  

D: Immunotherapy - Immunotherapy, such as pembrolizumab (Keytruda) or nivolumab (Opdivo), has shown significant efficacy in treating advanced melanoma by enhancing the body's immune response against cancer cells. Yes, that's a commonly used treatment option!  

I'm getting confused about the radiation option, so let me search for clarification. 
{\color{blue}</think>} \\

{\color{cyan}<search>} What is the role of radiation therapy in treating malignant melanoma? {\color{cyan}</search>} \\

{\color{gray}<information>} 
Doc 1 (Melanoma): Radiation therapy is often used after surgical resection for locally or regionally advanced or unresectable metastases. It may reduce local recurrence but does not prolong survival.  
Doc 2 (Radiation therapy): Melanoma is generally radioresistant; radiation is mainly palliative for metastatic disease.  
Doc 3 (Radiation therapy): Radiation requires higher doses for melanoma than most cancers and is not curative. {\color{gray}</information>} \\
\midrule

{\color{blue}<think>} 
Ah-ha! It seems that radiation therapy has a limited role in treating malignant melanoma and is mostly used in palliative care settings. It's not the first-line treatment for the disease. So, I think I can confidently say that... 
{\color{blue}</think>} \\

{\color{red}<answer>} A {\color{red}</answer>} \\

\bottomrule
\end{tabular}
\end{table}

\end{document}